\documentclass[twocolumn]{IEEEtran}
%\documentclass[twocolumn]{ieeeconf}
%\documentclass[letterpaper, 10pt, conference]{ieeeconf}
%\IEEEoverridecommandlockouts 
%\overrideIEEEmargins
%\documentclass[10pt,letter,conference]{IEEEtran}
%\documentclass[a4paper,10pt]{article}
\usepackage{amsmath,amscd,amssymb,amsgen,amsfonts,amsbsy,amsthm}
\usepackage{mathrsfs}
\usepackage[utf8x]{inputenc}
\usepackage{color}
\usepackage{textcomp}
\usepackage{float}
\usepackage{latexsym,graphicx}

\usepackage{tikz}
\usetikzlibrary{automata,arrows,positioning,calc}
\usepackage{url}
\usepackage{cite}
\usepackage{algorithmic}
\usepackage[ruled,lined]{algorithm2e}

\newcommand{\prob}[1]{\mathsf{Pr}\left( #1 \right)}

\newcommand{\remove}[1]{}

\newcommand{\comments}[1]{}

\newcommand{\given}{\; \big\vert \;} 
\newcommand{\bydef}{:=}
\newcommand{\ip}[2]{\langle #1,#2 \rangle}
% Debug

\newtheorem{lemma}{Lemma}

\newtheorem{theorem}{Theorem}
\newtheorem{remark}{Remark}
\newtheorem{assumption}{Assumption}

\makeatother

\title{Optimal Recommendation to Users that React:\\Online Learning
  for a Class of POMDPs}

\author{Rahul Meshram, Aditya Gopalan and D. Manjunath
% \begin{tabular}{ccc}
%   Rahul Meshram and D. Manjunath & Aditya Gopalan \\
%    Deptt. of Elecl. Engg.         & Deptt. of Elecl. Commun. Engg. \\
%    IIT Bombay, Mumbai INDIA       & Indian Inst. of Science, Bangalore INDIA.     
%  \end{tabular}
  \thanks{Rahul Meshram and D.~Manjunath are with the Electrical
    Engineering Department of IIT Bombay in Mumbai INDIA. Aditya
    Gopalan is with the Electrical Communication Engineering
    Department of Indian Institute of Science in Bangalore INDIA. The
    work of Rahul Meshram and D. Manjunath was carried out in the
    Bharti Centre for Communications at IIT Bombay. D.~Manjunath is
    also supported by grants from CEFIPRA and DST.} }

\begin{document}

\maketitle
\begin{abstract}
  We describe and study a model for an Automated Online Recommendation
  System (AORS) in which a user's preferences can be time-dependent
  and can also depend on the history of past recommendations and
  play-outs. The three key features of the model that makes it more
  realistic compared to existing models for recommendation systems are
  (1)~user preference is inherently latent, (2)~current
  recommendations can affect future preferences, and (3)~it allows for
  the development of learning algorithms with provable performance
  guarantees. The problem is cast as an average-cost restless
  multi-armed bandit for a given user, with an independent partially
  observable Markov decision process (POMDP) for each item of
  content. We analyze the POMDP for a single arm, describe its
  structural properties, and characterize its optimal policy. We then
  develop a Thompson sampling-based online reinforcement learning
  algorithm to learn the parameters of the model and optimize utility
  from the binary responses of the users to continuous
  recommendations. We then analyze the performance of the learning
  algorithm and characterize the regret. Illustrative numerical
  results and directions for extension to the restless hidden Markov
  multi-armed bandit problem are also presented.
\end{abstract}

\section{Introduction}
\label{sec:intro}

Automated online recommendation (AOR) systems for different types of
content aim to adapt to user's preferences and issue targeted
recommendations for content for which the user is estimated to have a
higher preference. In the generation of these recommendations, user
behavior is typically modeled as a {\em fixed}, stochastic response
governed by the preference, or taste, of the user for each specific
item that has been recommended for consumption. However, in most AOR
systems, the {\em dynamic} aspects of the response of the user to
recommended content are not modeled or investigated. For example,
consider an AOR for music. It is reasonable to assume that for some
items, there will be short-term user fatigue for a song that has been
just been recommended and played out. In this case, the user's
preference for the item drops sharply immediately after consumption
and rises with time subsequently. Hence, for such items it is ideal to
allow for an interval of time before recommending the item again. It
is also possible that the opposite is true---the user's appetite is
whetted with each successive recommendation of a particular item.
Capturing such response dynamics could involve assigning a notion of
state to the user's preference for each item at the time of the
choosing the recommendations. This state should in turn depend on the
history of the recommendations or play-outs for the item.

There are several technical challenges in capturing or estimating the
time dependent user preference to an item. Firstly, even for a
recommended item, the user's taste or preference for an item is never
directly observed; only a binary response in the form of like/dislike,
or play/skip, depending on the preference at that time is
available. Thus the actual preference is a latent quantity which needs
to be inferred and tracked continuously.  This motivates the use of a
hidden Markov model for the state of a user with respect to an
item---the state captures the instantaneous preference and the
response depends in a stochastic manner on the state.  The AOR {\em
  observes the response but not the state}. The second challenge is
that the act of {\em recommending content to the user itself changes
  the user's state of mind} (e.g., fatigued, stimulated) which in turn
influences the user's responses to future content. This means that the
model should allow for action dependent transitions between the
states. A third challenge is adapting to the heterogeneity among
users---two users may have not only different propensities towards a
content item but also different rates at which they react dynamically
to recommendations.  This in turn necessitates {\em learning the
  associated state transition models} under uncertainty of not knowing
  what state induced a response.

  This paper frames the problem of optimal recommendation under user
  adaptation and uncertainty as learning a stylized average-cost
  partially observable Markov decision process (POMDP). For a given
  user, an independent POMDP is associated with each item. The state
  of the POMDP expresses either a high (state 1) interest or a low
  (state 0) interest of the user for the item.  In each step, the
  AORS recommends one item and the user response for the item is
  determined by the state. This binary response is also available to
  the AORS. The states of each of the POMDPs changes accordingly as
  the the item is recommended or not recommended.  Thus the AORS can
  be seen to be a restless hidden Markov multi-armed bandit. In this
  paper we develop this model and describe a Thompson sampling
  mechanism to learn the parameters of the model using the response
  for each recommendation.  Specifically, our contributions in this
  paper are as follows.
\begin{enumerate}
\item Formulate a POMDP-based model for each item in the database 
of a recommendation system by incorporating user adaptation, hidden state 
and uncertainty in model. The AOR itself is modeled as an average cost 
restless hidden Markov multi-armed bandit.
\item Analyze the structure of the single-armed POMDP and show that
  the optimal policy is of single-threshold type in the belief.  A
  consequence of the single threshold is that the optimal policy has a
  cyclic form with a recommendation step followed by $k$
  no-recommendation steps. The optimal $k$ is derived.
\item Devise a natural online learning algorithm based on Thompson
  sampling (TS) for optimizing reward in the POMDP with no knowledge
  of the model parameters.
\item Derive what is, to our knowledge, the first known regret bounds
  for TS for online learning in a class of POMDPs.
\end{enumerate}

\subsection{Related Work}

Multi-armed bandit models for recommendation systems and for online
advertising have been modeled as contextual bandits, e.g.,
\cite{Langford07,Caron12,Li10} and the user interests are assumed to
be independent of the recommendation history, i.e., they have static
models of reward. There are several models for restless multi-armed
bandits that use state transitions and state-based rewards with
applications in dynamic spectrum access, e.g.,
\cite{LiuZhao10,Ouyang11, Li13}. Such models assume (1)~perfect
observation of the state when the arm is sampled, and (2)~state
transitions being independent of/unrelated to actions. There is some
work in modeling changing rewards in multi-armed bandits, e.g.,
\cite{Mansourifard12} but it is again in the fully observable state
case, thus circumventing the critical problem of state uncertainty
arising in user adaptation.  Other approaches towards handling user
reactions to recommendations have considered algorithms that use a
finite sequence of past user responses as a basis for deciding the
current recommendation, e.g., \cite{Hariri12}; but these are primarily
numerical studies. A more general framework for a restless multi-armed
bandit with unobservable states and action-dependent transitions was
considered in \cite{Meshram15,Meshram16a}. In \cite{Meshram16a} it was
shown that the such a system is approximately Whittle-indexable. The
restless bandit that we propose in this paper is a special case of
that from \cite{Meshram16a} for which we obtain much stronger results
and also a Thompson sampling-based algorithm to learn the parameters
of the arms.

The rest of the paper is organized as follows. In the next section we
describe the model and set up notation. In
Section~\ref{sec:optimal-policy} we analyze the structural properties
of the average-cost POMDP corresponding to a single arm. In
Section~\ref{sec:thompson} a Thompson sampling based algorithm is
described to learn the parameters of the POMDP based on the observed
reward. In Section~\ref{sec:regret-bound} we analyse the regret as
compared to the optimal policy. We conclude with some illustrative
numerical results and a discussion on extension to the multi-armed
bandit case.

\section{Model Description and Preliminaries}
\label{sec:model}

The AOR system is modeled as a restless multi-armed bandit with arm $i$
representing the state of the user with respect to item $i.$ Each arm
is modeled as an independent partially observable Markov decision
processes, with two states and two actions. We first describe the
model for a single generic arm. 

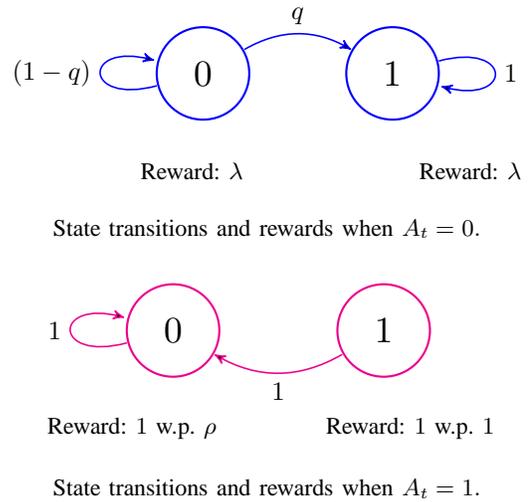
\begin{figure}
  \begin{center}
    \begin{tikzpicture}[draw=blue,>=stealth', auto, semithick, node distance=1.8cm]
      \tikzstyle{every state}=[fill=white,draw=blue,thick,text=black,scale=1.4]
      \node[state]    (A)                     {$0$};
      \node[state]    (B)[ right of=A]   {$1$};
      \path
      (A)  edge[loop left]     node{$(1-q)$}
      (A)
      edge[bend left,above,->]      node{$q $ }      (B)
      (B)  edge[loop right]    node{$1$}
      (B);
      % edge[bend left,below,->]      node{$\lambda_{n,1} $}         (A); 
      %\draw (-0.4,-1) node {\small{ No } };
      \draw (-0.2,-1.3) node {\small{ Reward: $\lambda$ }};
      %\draw (3.3,-1) node {\small{ No Signal }};
      \draw (3.5,-1.3) node {\small{ Reward: $\lambda$ }}; 
    \end{tikzpicture}
  \end{center}

  \begin{center}
   { \small{State transitions and rewards when $A_t=0.$}}
  \end{center}
  
  \vspace{5pt}
  
  \begin{center}
    \begin{tikzpicture}[draw=magenta, >=stealth', auto, semithick, node distance=2cm]
      \tikzstyle{every state}=[fill=white,draw=magenta,thick,text=black,scale=1.4]
      \node[state]    (A)                     {$0$};
      \node[state]    (B)[ right of=A]   {$1$};
      \path
      (A) edge[loop left]     node{$1$}         (A)
      % edge[bend left,above,->]      node{$(1- \mu_{n,0})  $}      (B)
      % (B) edge[loop right]    node{$(1- \mu_{n,1})$}     
      (B)     edge[bend left,below,->]      node{$1 $}         (A);    
      % \draw (-0.2,-1) node {\small {Observe $1$ w.p. $\rho$} };
      \draw (-0.6,-1.3) node {\small{ Reward: $1$ w.p. $\rho$} };
      % \draw (3.5,-1) node {\small{ Observe $1$ w.p. $1$} };
      \draw (3.1,-1.3) node {\small{ Reward: $1$  w.p. $1$} };
    \end{tikzpicture}
  \end{center}

  \begin{center}
    {\small{State transitions and rewards when $A_t=1.$}}
  \end{center}
  \caption{Single-arm POMDP. The state transitions and rewards for
    when the item is recommended and when it is not recommended.}
  \label{Fig:state-transitions}  
\end{figure}

$\mathcal{S} = \{0,1\}$ is the set of states with $0$ corresponding to
a low interest and $1$ corresponding to high interest.
$\mathcal{A}= \{ 0, 1\}$ is the set of actions with $1$ for sampling
the arm and $0$ for not sampling the arm.
$r:\mathcal{S} \times \mathcal{A} \rightarrow \mathcal{R}$ is the
reward function with $\mathcal{R} =\{0,1, \lambda \}$ being the set of
rewards. Time progresses in discrete steps indexed by
$t = 1, 2, 3, \ldots$ A step is an instant at which a recommendation
can potentially be made by the AORS. $X_t$ denotes the state of the
arm at the start of time step $t,$ $A_t$ is the action played at time
$t$ and and $R_t$ is the reward obtained at time $t.$

The reward structure for the POMDP is as follows. A unit reward is
obtained with probability $1$ if $A_t=X_t=1.$ This corresponds to
recommending the item when the user's interest is high. A unit reward
is obtained with probability $\rho$ if $A_t=1$ and $X_t=0,$ i.e., if
an item is recommended and the interest is low.  A reward $\lambda$ is
obtained independent of $X_t$ for $A_t = 0,$ i.e., when the item is
not recommended.
\begin{remark}
  A restless multi-armed bandit is typically analyzed by first
  analyzing the single arm case with a reward of $\lambda$ for
  $A_t=0.$ $\lambda$ is called the subsidy for not sampling. Such an
  analysis is used to determine if the Whittle index based policy can
  be used to optimally choose the arm at each time step. This analysis
  is also used to calculate the Whittle-index for an arm. 
\end{remark}
If $A_t=0,$ then $0 \rightarrow 1$ transitions occur with probability
$q$ and $1 \rightarrow 0$ transitions with probability $0.$ This
corresponds to an increasing interest as the time progresses since the
last recommendation. If $A_t=1,$ then $0 \rightarrow 0$ and
$1 \rightarrow 0$ transitions happen with probability $1$
corresponding to a decreased interest after a recommendation.
Fig.~\ref{Fig:state-transitions} illustrates the preceding discussion
in the form of a state transition diagram.

Recall that the actual state (0 or 1) is never observed. Let $\pi_t=
\prob{X_t=0},$ denote the belief about the state of the arm at the
beginning time $t.$ %Let $\phi$ be a policy that determines 
Let $H_t$ denote the history of actions and rewards up to time $t.$
Let $\phi_t: H_t \to \{0,1\}$ be the strategy that determines the
action at time $t.$ For a strategy $\phi$ and an initial belief $\pi$
at time $t=0$ (i.e., $\prob{X_0 = 0} = \pi$), the expected finite
horizon reward function is 
\begin{eqnarray*}
  V_T^{\phi}(\pi) & = & \mathbb{E}^{\phi}  \left[\sum_{t=0}^{T-1}  r(X_t,A_t)  
    \bigg \vert \pi_0 = \pi \right] \\
  & = & \mathbb{E}^{\phi}\left[ \sum_{t=0}^{T-1} \left[\pi_t \rho + 
      (1- \pi_t) 1 \right] \bigg \vert \pi_0 = \pi \right]. 
\end{eqnarray*}
The long term average reward is defined as $V^{\phi}(\pi) = \lim_{T
  \rightarrow \infty} \frac{1}{T}V_T^{\phi}(\pi).$ 

In the next section we assume that $(q, \rho),$ are known and
determine the policy $\phi$ that maximizes $V^{\phi}(\pi).$

\section{Single Arm: Optimal Policy }
\label{sec:optimal-policy}

We solve the average reward problem described in the previous section
by the vanishing discount approach \cite{Ross93,Avrachenkov15}---by
first considering a discounted reward system and then taking limits as
the discount vanishes. The infinite horizon discounted reward under
policy $\phi$ and discount $\beta,$ $0 < \beta < 1,$ is
\begin{eqnarray}
  V_{\beta}^{\phi}(\pi) & := & \mathbb{E}^{\phi} \left[ 
    \sum_{t=1}^{\infty} \beta^{t-1} \left( a^{\phi}_t(\pi_t \rho + (1-\pi_t)) +
    \right. \right. \nonumber \\
  && \left. \left. (1-a^{\phi}_t) \lambda \right) \bigg \vert \pi_0=\pi \right]. 
  \label{eq:SAB-valfn_disc}
\end{eqnarray}
From \cite{Meshram16a}, we can show that the following dynamic program
solves \eqref{eq:SAB-valfn_disc}. 

{\small
\begin{eqnarray}
  V_{\beta}(\pi)   =  \max \left\{ \lambda + \beta V_{\beta}((1-q)\pi),
    1- \pi(1-\rho) + \beta V_{\beta}(1) \right\} \hspace{-5pt}
  \label{eqn:V-of-pi}
\end{eqnarray}
}

Further, we can state the following about $V_{\beta}(\pi).$
\begin{lemma}
  \begin{enumerate}
  \item Equation~\eqref{eqn:V-of-pi} has a unique solution
    $V_{\beta}(\pi).$ Further, $V_{\beta}(\pi)$ is continuous and
    bounded.
  \item $V_{\beta}(\pi)$ is convex non-increasing in $\pi$ and
    increasing in $\beta.$
  \item $\big\vert V_{\beta}(\pi_1)- V_{\beta}(\pi_2)\big\vert <
    (1-\rho)$ for all $\pi \in [0,1]$ and $\beta \in [0,1).$
  \item The optimal policy is of threshold type with a single
    threshold for $\beta \in [0, 1)$ and $\lambda_L \leq \lambda \leq
    \lambda_H.$
  \end{enumerate} 
  \label{lemma:props-of-V-of-pi}
\end{lemma}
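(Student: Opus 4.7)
The plan is to handle all four parts via the value-iteration / contraction-mapping toolkit, exploiting the fact that the two branches in \eqref{eqn:V-of-pi} are particularly simple (one affine in $\pi$, the other a composition of $V_\beta$ with the affine contraction $\pi \mapsto (1-q)\pi$). For part 1, I would let $T$ be the Bellman operator on $C([0,1])$ sending $f$ to $(Tf)(\pi) = \max\{\lambda + \beta f((1-q)\pi),\ 1 - \pi(1-\rho) + \beta f(1)\}$. Both branches preserve continuity, so $T$ maps $C([0,1])$ to itself, and the elementary inequality $|\max(a,b) - \max(c,d)| \leq \max(|a-c|,|b-d|)$ makes $T$ a $\beta$-contraction in sup-norm; Banach's theorem then yields a unique bounded continuous fixed point. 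For part 2, I would iterate from $V^{(0)} \equiv 0$ and check that convexity and monotonicity in $\pi$ are preserved: the first branch is convex and non-increasing as a composition of a convex non-increasing $V^{(n)}$ with the non-negative-slope affine map $\pi \mapsto (1-q)\pi$, the second branch is affine and non-increasing, and the pointwise maximum of convex non-increasing functions is convex and non-increasing. The uniform limit inherits both properties. Monotonicity in $\beta$ is immediate from the policy representation: rewards are non-negative, so $V_\beta^\phi(\pi) = \mathbb{E}^\phi[\sum_t \beta^t r_t]$ is non-decreasing in $\beta$ for every $\phi$, and the supremum over $\phi$ preserves this.

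For part 3 I would actually establish the stronger Lipschitz statement $|V_\beta(\pi_1) - V_\beta(\pi_2)| \leq (1-\rho)|\pi_1 - \pi_2|$. By induction assume $V^{(n)}$ is $M_n$-Lipschitz with $M_n \leq 1-\rho$. Applying $|\max(a,b) - \max(c,d)| \leq \max(|a-c|,|b-d|)$ branchwise, the first branch of $TV^{(n)}$ contributes $\beta(1-q) M_n |\pi_1 - \pi_2|$ and the second contributes $(1-\rho)|\pi_1 - \pi_2|$, giving $M_{n+1} \leq \max\{\beta(1-q) M_n,\ 1-\rho\} \leq 1-\rho$. Starting from $M_0 = 0$ the bound propagates, and uniform convergence transfers it to $V_\beta$; since $\pi \in [0,1]$ the claim follows.

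For part 4 I would use the convexity established in part 2. Define the advantage of recommending over not recommending by $g(\pi) := [1 - \pi(1-\rho) + \beta V_\beta(1)] - [\lambda + \beta V_\beta((1-q)\pi)]$. Since $V_\beta$ is convex, $V_\beta((1-q)\pi)$ is convex in $\pi$, so $-\beta V_\beta((1-q)\pi)$ is concave, and $g$, being the sum of an affine and a concave function, is concave on $[0,1]$. The super-level set $\{\pi : g(\pi) \geq 0\}$ on which recommending is optimal is therefore an interval. Part 3 gives $g(0) = 1 - \lambda - \beta(V_\beta(0) - V_\beta(1)) \geq 1 - \lambda - \beta(1-\rho)$, so for $\lambda$ no larger than an explicit $\lambda_H$ the left endpoint of this interval is $0$; a symmetric lower bound on $\lambda$ (an explicit $\lambda_L$, chosen so that $g(1) < 0$) forces the right endpoint strictly below $1$. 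The super-level set is then of the form $[0,\pi^*]$, which is exactly the single-threshold structure claimed. The main anticipated obstacle is pinning down the constants $\lambda_L,\lambda_H$ sharply: the Lipschitz bound of part 3 only yields an outer window for them, and making the threshold nontrivial and interior likely requires a finer estimate of $V_\beta(1)-V_\beta(1-q)$ extracted directly from the Bellman equation.
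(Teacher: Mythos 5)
Your proposal is correct, but it takes a genuinely different route from the paper: the paper offers no argument at all for this lemma, simply citing \cite{Meshram16a} for parts 1--2 and \cite{Meshram16b} for parts 3--4, whereas you give a self-contained contraction/value-iteration proof. Your parts 1 and 2 are the standard Banach fixed-point and structure-preservation arguments and are sound (monotonicity in $\beta$ implicitly uses $\lambda \geq 0$, which is consistent with the subsidy interpretation). On part 3, note that your induction delivers $|V_{\beta}(\pi_1)-V_{\beta}(\pi_2)| \leq (1-\rho)|\pi_1-\pi_2|$, hence the non-strict bound $\leq (1-\rho)$; the paper's strict inequality is in fact not attainable in general (e.g.\ for $\beta$ small and $\lambda < \rho - \beta(1-\rho)$ one gets $V_{\beta}(0)-V_{\beta}(1) = 1-\rho$ exactly), so your version is the correct statement and is all that is used downstream, where only a finite uniform bound is needed for the Arzela--Ascoli step. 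On part 4, your concavity-of-the-advantage argument is valid once $g(0) \geq 0$, which your condition $\lambda \leq \lambda_H = 1-\beta(1-\rho)$ guarantees, matching the lemma's restriction $\lambda_L \leq \lambda \leq \lambda_H$; your worry about pinning down $\lambda_L,\lambda_H$ can be sidestepped entirely by instead combining parts 2 and 3: the passive branch $\lambda + \beta V_{\beta}((1-q)\pi)$ has slope bounded below by $-\beta(1-q)(1-\rho)$, strictly flatter than the active branch's slope $-(1-\rho)$, so the advantage $g$ is strictly decreasing in $\pi$ and crosses zero at most once, giving the single-threshold structure directly (for any $\lambda$) without endpoint conditions. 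In short, your proof is a correct and more explicit substitute for the paper's citation-based justification, modulo replacing the strict inequality in part 3 by a non-strict one.
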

The first two above follow directly from \cite{Meshram16a} and the last
two are derived in \cite{Meshram16b}. 

Define $\overline{V}_{\beta} := V_{\beta}(\pi) - V_{\beta}(1)$ for
$\pi \in [0,1].$  From \eqref{eqn:V-of-pi}, we get
\begin{eqnarray}
  \overline{V}_{\beta} + (1- \beta)V_{\beta}(1)  & = & \max \left\{ 
    \lambda + \beta \overline{V}_{\beta}((1-q)\pi),  \right. \nonumber \\
  && \left. 1- \pi(1-\rho) \right\}
\label{eqn:V-bar-of-beta}
\end{eqnarray}

From Lemma~\ref{lemma:props-of-V-of-pi}, $\overline{V}_{\beta}(\pi)$
is convex monotone in $\pi$ and by definition
$\overline{V}_{\beta}(1) = 0.$ Further, from the lemma we know that
there is a constant $C < \infty$ such that
$\big\vert V_{\beta}(\pi) - V_{\beta}(1) \big\vert < C.$ This implies
that $\overline{V}_{\beta}(\pi)$ is bounded and Lipschitz-continuous.
Finally, $(1-\beta)V_{\beta}(\pi)$ is also bounded.  Hence we can
apply the Arzela-Ascoli theorem \cite{rudin-principles}, to find a
subsequence $(\overline{V}_{\beta_k}(\pi), (1-\beta)V_{\beta_k}(\pi))$
that converges uniformly to $(V(\pi),g)$ as $\beta_k \rightarrow 1.$
Thus, as $\beta_k \rightarrow 1,$ along an appropriate subsequence,
\eqref{eqn:V-bar-of-beta} reduces to
\begin{eqnarray}
  V(\pi) + g = \max \left\{\lambda+ V((1-q)\pi), 1 -\pi(1-\rho) \right\}, 
  \label{eq:dynamic-prog-avgc}
\end{eqnarray}
for all $\pi \in [0,1].$ \eqref{eq:dynamic-prog-avgc} is the dynamic
programming equation whose solution gives us the optimal value
function that maximizes the average reward.

Since $V(\pi)$ inherits the structural properties of $V_{\beta}(\pi),$
we have, from Lemma~\ref{lemma:props-of-V-of-pi}, that
\begin{lemma}
\label{lem:inherit}
\begin{enumerate}
\item $V(\pi)$ is monotone non-increasing and convex in $\pi.$ 
\item The optimal policy is of threshold type with a single threshold 
  for $\lambda_L \leq \lambda  \leq \lambda_H.$
\end{enumerate}
\end{lemma}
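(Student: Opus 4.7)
The plan is to carry both structural properties from the discounted value function $V_\beta$ to the average-reward limit $V$ via the uniform convergence $\overline{V}_{\beta_k} \to V$ established just above the lemma; nothing genuinely new needs to be computed, the argument is entirely one of passing properties through the limit.

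For the first claim, observe that $\overline{V}_{\beta_k}(\pi) = V_{\beta_k}(\pi) - V_{\beta_k}(1)$ differs from $V_{\beta_k}$ by a $\pi$-independent constant, so it inherits convexity and non-increasing monotonicity from Lemma~\ref{lemma:props-of-V-of-pi}(2). Both properties are preserved under pointwise (hence uniform) limits: convexity passes through the two-point inequality $\theta f(\pi_1) + (1-\theta) f(\pi_2) \geq f(\theta \pi_1 + (1-\theta) \pi_2)$, and monotonicity through $f(\pi_1) \geq f(\pi_2)$ whenever $\pi_1 \leq \pi_2$. Hence $V(\pi)$ is convex and non-increasing on $[0,1]$.

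For the second claim I would transfer the threshold structure through the same limit. Lemma~\ref{lemma:props-of-V-of-pi}(4) supplies, for each $\beta \in [0,1)$ with $\lambda \in [\lambda_L, \lambda_H]$, a threshold $\pi^{\star}(\beta) \in [0,1]$ such that action $1$ is optimal for $\pi \leq \pi^{\star}(\beta)$ and action $0$ otherwise. Along the subsequence $\beta_k \to 1$ used in the Arzela-Ascoli step, the sequence $\{\pi^{\star}(\beta_k)\} \subset [0,1]$ admits a convergent sub-subsequence by compactness; after relabelling, set $\pi^{\star} := \lim_k \pi^{\star}(\beta_k)$. Fix any $\pi < \pi^{\star}$; for all large $k$ we have $\pi < \pi^{\star}(\beta_k)$, so the discounted-case optimality of action $1$ together with \eqref{eqn:V-of-pi} gives $1 - \pi(1-\rho) \geq \lambda + \beta_k \overline{V}_{\beta_k}((1-q)\pi)$. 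Uniform convergence $\overline{V}_{\beta_k} \to V$ and $\beta_k \to 1$ pass this inequality to the limit, yielding $1 - \pi(1-\rho) \geq \lambda + V((1-q)\pi)$, so action $1$ attains the maximum in \eqref{eq:dynamic-prog-avgc}. The symmetric argument for $\pi > \pi^{\star}$ shows action $0$ is optimal there, so the limiting policy is of threshold type with threshold $\pi^{\star}$.

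The hard part will be the second half, specifically ensuring that the limit inherits the \emph{single}-threshold character rather than some less restrictive structure (for example, an interior action-$1$ window $[a,b]$ with $a > 0$). The compactness-plus-continuity argument above rules this out because each discounted optimal policy has exactly one transition point and passing to a subsequence of such policies preserves this; the only ambiguity is the boundary case $\pi = \pi^{\star}$, where the two actions tie in the limit---a harmless indifference that does not affect the value function. A secondary technical detail is checking that uniform convergence is strong enough to justify passing every required inequality to the limit, which is immediate given that $\overline{V}_{\beta_k} \to V$ uniformly on $[0,1]$ and $(1-\beta_k) V_{\beta_k}(1) \to g$.
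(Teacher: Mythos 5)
Your proposal is correct and follows essentially the same route as the paper, which simply asserts that $V$ inherits the structural properties of $V_\beta$ through the vanishing-discount (Arzela--Ascoli) limit of Lemma~\ref{lemma:props-of-V-of-pi}. You merely make this inheritance explicit---convexity/monotonicity pass through the uniform limit, and a compactness argument on the discounted thresholds $\pi^{\star}(\beta_k)$ carries the single-threshold structure into \eqref{eq:dynamic-prog-avgc}---which is a faithful elaboration of the paper's one-line argument rather than a different proof.
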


This in turn leads us to the following theorem which is a direct analog of
Theorem~6.17 in \cite{Ross93}.
\begin{theorem}
  If there exists a bounded function $V(\pi)$ for $\pi \in [0,1]$ and
  a constant $g$ that satisfies \eqref{eq:dynamic-prog-avgc}, then
  there exists a stationary policy $\phi^*$ such that
  \begin{eqnarray}
    g = \max_{\phi}\lim_{T \rightarrow \infty} \frac{1}{T} V_{T}^{\phi}(\pi)
  \end{eqnarray}
  for all $\pi \in [0,1]$, and moreover, $\phi^*$ is the policy for
  which the RHS of \eqref{eq:dynamic-prog-avgc} is maximized.
  \label{thm:avg-reward-opt-policy}
\end{theorem}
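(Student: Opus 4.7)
The plan is to mimic the standard verification argument for average-reward MDPs (Ross, Theorem~6.17), adapted to the belief-space POMDP at hand. The key observation is that the dynamic programming equation \eqref{eq:dynamic-prog-avgc} already encodes both an inequality valid for every stationary policy and an equality attained by the greedy selector. Since $V(\pi)$ is bounded and the belief state $\pi_t$ lives in the compact set $[0,1]$ (with belief update either $\pi_t \mapsto (1-q)\pi_t$ when $A_t=0$ or $\pi_t \mapsto 1$ when $A_t=1$), the iteration is straightforward.

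First, define $\phi^*(\pi)$ to be any action attaining the maximum on the RHS of \eqref{eq:dynamic-prog-avgc}; because $V(\pi)$ is continuous (in fact Lipschitz by Lemma~\ref{lemma:props-of-V-of-pi}) and both candidate expressions are continuous in $\pi$, a measurable maximizer exists, and by Lemma~\ref{lem:inherit}(2) it takes a threshold form. Next, for an arbitrary stationary policy $\phi$ applied from initial belief $\pi_0 = \pi$, the DP equation gives
\begin{equation*}
V(\pi) + g \;\ge\; \mathbb{E}^{\phi}\!\left[ r(X_0,A_0) + V(\pi_1) \,\big|\, \pi_0 = \pi \right],
\end{equation*}
where the instantaneous expected reward equals $\pi\rho + (1-\pi)$ if $A_0=1$ and $\lambda$ if $A_0=0$. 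Iterating this inequality forward for $T$ steps and using the tower property yields
\begin{equation*}
V(\pi) + T g \;\ge\; \mathbb{E}^{\phi}\!\left[ \sum_{t=0}^{T-1} r(X_t,A_t) + V(\pi_T) \,\Big|\, \pi_0 = \pi \right].
\end{equation*}

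Because $V$ is bounded, dividing by $T$ and letting $T \to \infty$ eliminates both the $V(\pi)/T$ and $\mathbb{E}^{\phi}[V(\pi_T)]/T$ terms, giving $g \ge \limsup_{T\to\infty} \frac{1}{T} V_T^{\phi}(\pi)$ for every policy $\phi$. Under $\phi^*$ the DP equation holds with equality at every step, so the same iteration is an equality, yielding $g = \lim_{T\to\infty} \frac{1}{T} V_T^{\phi^*}(\pi)$; combined with the previous inequality this identifies $g$ with the optimum over $\phi$ and shows that $\phi^*$ attains it.

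The main obstacle I anticipate is not the inequality chain itself but justifying that the liminf/limsup coincide in the equality case for $\phi^*$, i.e., that the sequence $\mathbb{E}^{\phi^*}[V(\pi_T)]$ remains bounded and the Cesàro average of rewards under $\phi^*$ actually converges. This is where the structural work from Section~\ref{sec:optimal-policy} pays off: the threshold policy $\phi^*$ induces a deterministic cycle on the belief (sample, then let the belief drift via $\pi \mapsto (1-q)\pi$ for $k$ steps until it crosses the threshold), so $\pi_t$ lies in a bounded, eventually periodic trajectory, the per-step rewards are bounded, and the Cesàro limit exists and equals $g$. With these ingredients, invoking the argument of Theorem~6.17 of \cite{Ross93} verbatim completes the proof.
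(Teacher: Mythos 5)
Your proposal is correct and takes essentially the same route as the paper, which does not spell out the iteration at all but simply presents the theorem as a direct analog of Theorem~6.17 of \cite{Ross93}; your argument is exactly the standard verification proof behind that citation (iterate the DP relation \eqref{eq:dynamic-prog-avgc}, divide by $T$, use boundedness of $V$, and note that boundedness of $V$ alone already forces $\tfrac{1}{T}V_T^{\phi^*}(\pi)\to g$, so the cyclic-trajectory discussion is not even needed). One minor point: carry out the inequality step for arbitrary, possibly history-dependent policies $\phi_t\colon H_t\to\{0,1\}$ rather than only stationary ones --- the DP inequality holds for every action choice, so the identical iteration applies and matches the maximization over all strategies in the theorem statement.
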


\subsection{An Equivalent Form for the Optimal Policy}
\label{sec:equivopt}
For the single-armed bandit, the threshold policy of
Lemma~\ref{lem:inherit} can be interpreted as follows. Let $\pi_T$ be
the threshold such that the optimal policy is $A_t=1$ if
$\pi_t \leq \pi_T$ and $A_t=0,$ if $\pi_t > \pi_T.$ We know that if
$A_t=1,$ then $\pi_{t+1}=1,$ i.e., if the item is recommended then the
state becomes 0. When the item is not recommended, the belief about
state $0$ decreases by a factor of $(1-q).$ Since there is a single
threshold and $\pi_t$ monotonically decreases every time the item is
not recommended, the optimal policy will be to wait for $k$ steps
before recommending again, where $k$ is the first time that $\pi_t$
has crossed $\pi_T.$ This value $k$ is a function of $q$ and $\rho$
and will be denoted by $k_{pt}(q, \rho).$

We first consider infinite horizon discounted reward problem. In this
case, solving \eqref{eq:SAB-valfn_disc} is equivalent to solving
following optimization problem.
\begin{eqnarray}
  k_{\beta,opt}(q,\rho)  = \arg\max_{k \geq 1} \tilde{V}_{\beta}(k),
\end{eqnarray}
where $\tilde{V}_{\beta}(k)$ is the value function obtained by not
recommending for $k$ steps between successive recommendations. We can
write
\begin{eqnarray*}
  \tilde{V}_{\beta}(k) & := &\left\{ \lambda+ \lambda \beta + \lambda \beta^2 + 
    \cdots + \lambda \beta^{k-1} \right. \\
  && \hspace{10pt} \left. + \beta^{k} \left[ (1-q)^{k} \rho + 1 - 
      (1-q)^k \right] \right\} \\
  &&+ \beta^{k+1} \left\{ \lambda+ \lambda \beta + \lambda \beta^2 + 
    \cdots + \lambda \beta^{k-1} \right. \\
  && \hspace{10pt} \left. + \beta^{k} \left[ (1-q)^{k} \rho + 1 - 
      (1-q)^k \right] \right\} \\
  && + \beta^{2(k+1)} \left\{ \lambda+ \lambda \beta + \lambda \beta^2 + 
    \cdots + \lambda \beta^{k-1} \right. \\
  && \hspace{10pt} \left. + \beta^{k} \left[ (1-q)^{k} \rho + 1 - (1-q)^k \right] 
  \right\}  + \cdots
\end{eqnarray*}
Let $C_k$ denote the reward from the first one cycle of $k$ steps with
no recommendations for $(k-1)$ steps and a recommendation in the
$k$-th step. We can write 
\begin{eqnarray*}
  && C_k := \  \lambda+ \lambda \beta + \lambda \beta^2 + \cdots + 
  \lambda \beta^{k-1} \\
  && \hspace{45pt} + \beta^{k} \left[ (1-q)^{k} \rho + 1 - (1-q)^k \right]  \\
  && \hspace{15pt} = \lambda \frac{(1-\beta^k)}{(1-\beta)} + \beta^k 
  \left[ (1-q)^{k} \rho + 1 - (1-q)^k \right]
\end{eqnarray*}
The first $k-1$ terms above correspond to the reward from not sampling
and the $k$th term denotes the reward from sampling. Thus,
$\tilde{V}_{\beta}(k)$ can be rewritten as follows.

{\small
\begin{eqnarray*}
 &&  \hspace{-10pt} \tilde{V}_{\beta}(k) \ = \ C_k \left[ 1+ \beta^{k+1} + \beta^{2(k+1)} + 
    \cdots \right] \\
  && \hspace{-10pt}  = \frac{1}{1- \beta^{k+1}} \left[ \lambda \frac{(1-\beta^k)}{(1-\beta)} 
    + \beta^k \left[ (1-q)^{k} \rho + 1 - (1-q)^k \right] \right] \\
%  && k_{\beta,opt}(q, \rho) = \arg\max_{k \geq 0}\frac{1}{1- \beta^{k+1}} 
%  \left[ \lambda \frac{(1-\beta^k)}{(1-\beta)} + \beta^k 
%    \left[ (1-q)^{k} \rho + 1 - (1-q)^k \right] \right]
\end{eqnarray*}
}

\vspace{-1.5\baselineskip}
The preceding discussion gives us the following result on the value function
and the optimal policy for the average reward criterion POMDP. 

\begin{theorem}
\begin{enumerate}
\item The value function for policy $k$ is
\begin{eqnarray*}
\hspace{-10pt}  \tilde{V}(k)  &=& \lim_{\beta \rightarrow 1} (1-\beta)\tilde{V}_{\beta}(k) \\
  &=& \frac{1}{{k+1}} \left[ \lambda k + 
    \left[ (1-q)^{k} \rho + 1 - (1-q)^k \right] \right].
\end{eqnarray*}
\item The optimum policy $k_{opt}(q,\rho)$ satisfies 

{\small
\begin{eqnarray}
 && \hspace{-35pt}  k_{opt}(q, \rho) \ = \ \arg\max_{k \geq 1} \tilde{V}(k) \nonumber \\
  &&\hspace{-35pt}  =\  \arg\max_{k \geq 1} \frac{1}{{k+1}} \left[ \lambda k + 
    \left[ (1-q)^{k} \rho + 1 - (1-q)^k \right] \right]
  \label{eqn:optimal-k}
\end{eqnarray}
}

\end{enumerate}
\end{theorem}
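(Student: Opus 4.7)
The proof has two parts, and for both I plan to lean directly on the closed-form expression for $\tilde{V}_{\beta}(k)$ and the structural results already established in Section~\ref{sec:optimal-policy} and Section~\ref{sec:equivopt}, rather than redoing the vanishing-discount machinery.

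For part (1), I would start from the closed form for $\tilde{V}_{\beta}(k)$ derived just before the theorem, multiply through by $(1-\beta)$, and use the elementary identity $(1-\beta^m)/(1-\beta) = 1 + \beta + \cdots + \beta^{m-1}$ to clear the apparent $0/0$ behaviour at $\beta = 1$. Concretely, this gives
$$(1-\beta)\tilde{V}_{\beta}(k) = \frac{\lambda(1+\beta+\cdots+\beta^{k-1}) + \beta^k\bigl[(1-q)^k\rho + 1 - (1-q)^k\bigr]}{1+\beta+\beta^2+\cdots+\beta^k}.$$
Both numerator and denominator are polynomials in $\beta$, hence continuous at $\beta = 1$; evaluating at $\beta = 1$ yields $\tilde{V}(k) = \bigl[\lambda k + (1-q)^k\rho + 1 - (1-q)^k\bigr]/(k+1)$, as claimed.

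For part (2), I would appeal to the cyclic equivalence discussed in Section~\ref{sec:equivopt}. Lemma~\ref{lem:inherit} asserts that the optimal stationary policy is of single-threshold type in the belief $\pi$. Because the belief evolves deterministically as $\pi_{t+1} = (1-q)\pi_t$ whenever the arm is not recommended and resets to $\pi_{t+1} = 1$ whenever it is, any threshold policy coincides with a cyclic policy that recommends exactly once every $k+1$ steps, where $k \geq 1$ is the smallest integer for which the decayed belief crosses the threshold. The search over stationary threshold policies therefore reduces to a search over $k \in \{1,2,3,\ldots\}$. A direct renewal-reward computation identifies the long-run average reward of the $k$-cycle policy with (reward per cycle)/(length of cycle), which, on plugging in the two-state reward and transition model, matches precisely the formula for $\tilde{V}(k)$ obtained in part (1). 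Theorem~\ref{thm:avg-reward-opt-policy} then lets us identify $\max_{k\geq 1}\tilde{V}(k)$ with the optimal constant $g$ in~\eqref{eq:dynamic-prog-avgc}, giving~\eqref{eqn:optimal-k}.

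The main obstacle I anticipate is justifying the interchange of $\lim_{\beta \to 1}$ and $\arg\max_{k}$ if one insists on staying inside the discounted framework to derive part (2). The renewal-reward route above bypasses this entirely, but if one prefers the vanishing-discount argument, the swap can be handled by noting that $\tilde{V}(k) \to \lambda$ as $k \to \infty$ (since $(1-q)^k \to 0$ and $(\lambda k + 1)/(k+1) \to \lambda$), and similarly $(1-\beta)\tilde{V}_{\beta}(k)$ is dominated by a value approaching $\lambda$ for large $k$, uniformly in $\beta$ near $1$. This confines the optimum to a finite range of $k$, on which pointwise convergence from part (1) upgrades to convergence of the $\arg\max$. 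Everything else is algebraic.
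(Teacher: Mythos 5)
Your proposal is correct and follows essentially the same route as the paper, which proves the theorem simply by the discussion preceding it: the threshold structure of Lemma~\ref{lem:inherit} reduces the search to cyclic recommend-every-$(k+1)$-steps policies, the discounted value $\tilde{V}_{\beta}(k)$ is summed in closed form, and the average reward is obtained as $\lim_{\beta\to 1}(1-\beta)\tilde{V}_{\beta}(k) = \frac{1}{k+1}\bigl[\lambda k + (1-q)^k\rho + 1-(1-q)^k\bigr]$, exactly your part (1) computation. Your additional care — the renewal-reward identification of $\tilde{V}(k)$ as the long-run average of the $k$-cycle policy and the justification of interchanging $\lim_{\beta\to1}$ with $\arg\max_k$ via $\tilde{V}(k)\to\lambda$ as $k\to\infty$ (implicitly assuming $\lambda$ is in the regime where the optimum is finite, as the paper also assumes) — goes slightly beyond the paper, which leaves these steps implicit.
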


Thus, for the single armed bandit, given $q,$ $\rho$ and $\lambda,$ we
obtain the optimal policy as the number of steps to wait before
recommending the item again. In the next section we describe the
Thompson sampling algorithm to learn the parameters based on the
reward that is observed. Subsequently, we analyze the regret from the
learning process. We remind the reader that the state is never
observed in the system and the learning is based only on rewards.

\section{Thompson Sampling learning algorithm}
\label{sec:thompson}
We have seen that the optimal policy for the (single-arm) POMDP
described by $q$ and $\rho$ is of threshold type
(Section~\ref{sec:equivopt}), and corresponds to waiting for
$k_{\text{opt}}(q, \rho)$ steps in between successive
recommendations. However, when the parameters $q$, $\rho$ that
describe the Markov chain transition probabilities are {\em unknown} a
priori\footnote{as in an AOR system where user behavior is unknown at
  start}, they must be {\em learnt} or {\em inferred} from the
available feedback in order to attain maximum cumulative reward. This
section describes an online algorithm that learns to play the optimal
policy using experience, i.e., observations from previously played
actions, while at the same time keeping the net reward as high as
possible (the explore-exploit problem).

The learning algorithm (Algorithm~\ref{algo:TS}) is a version of the
popular Thompson sampling strategy \cite{Thompson}, developed
originally for stochastic multi-armed bandit problems
\cite{AgrawalG}, and subsequently extended to learning in
Markov Decision Processes (MDPs) \cite{OsbVanRoy14:eluderRL,Gopalan15,AbbPalCsa11:linbandits}
and POMDPs. It works in epochs, where an epoch is defined to be the
interval of time from an instant at which the POMDP is sampled (action
$1$ is played) up until the next instant at which it is sampled
again. At the beginning, the algorithm initializes a prior or belief
distribution\footnote{Note that the prior used in Thompson sampling is
  merely a parameter of the algorithm (e.g., the uniform measure over a
  compact domain), carefully designed to induce random exploration, and
  is not related to any Bayesian modeling assumptions on the true
  model as such.}  on the space of all candidate parameters/models,
which in our case is any subset $\mathcal{X}$ of the unit square
$[0,1] \times [0,1]$ containing all possible POMDPs parameterized by
$(q,\rho)$. At the start of each epoch $\ell \geq 1$, a model
$(q_\ell, \rho_\ell)$ is randomly and independently sampled according
to the current prior over $\mathcal{X}$ (this random draw is crucial
in inducing exploration over the model space). Then, the optimal
policy for this sampled model is computed, which by the previous
results corresponds to sampling the chain after an interval of
$k_{\text{opt}}(q_\ell, \rho_\ell)$ time instants\footnote{This could
  be carried out using either standard planning methods such as
  value/policy iteration or exhaustive search over threshold-type
  policies.}. This policy is now applied for one cycle, i.e., the
algorithm waits for the specified interval of time instants, samples
the chain at the next time instant, and obtains an observation for the
sampled instant (a Bernoulli-distributed reward). The observed reward
is used to update the prior over models via Bayes' rule, the epoch
ends, and the next epoch starts with the updated prior.

\noindent {\em Notation.} In Algorithm~\ref{algo:TS},
$\mathcal{B}(\mathcal{X})$ denotes the Borel $\sigma$-algebra of
$\mathcal{X} \subset \mathbb{R}^d$.  $\prob{R = r \given (q, \rho), k}$
denotes the likelihood, under the POMDP model specified by parameters
$(q,\rho)$, of observing a reward of $r \in \{0,1\}$ upon sampling the
Markov chain after having not sampled the chain for exactly $k$
previous time instants. %
%
% $R_t$ is reward at time step $t$ and $k_t$ is policy at time step $t.$
% From this algorithm, we say that $B \subset \mathcal{X},$ under policy $k_t,$ we have posterior
% \begin{eqnarray*}
% Z_{t+1}(B) = \frac{\int_{B}P(R_t , k_t, q, \rho)Z_t(q,\rho) dq d\rho}{\int_{\mathcal{X}} P(R_t,k_t,q,\rho)Z_t(q,\rho) dq d\rho}.
% \end{eqnarray*}
%
% When parameters are unknown, we will have Thompson sampling algorithm.
% We will have to state the regret bound result. We will assume that the
% optimal policy is of single threshold type.
\begin{algorithm}[htbp]
  \caption{Thompson sampling algorithm for learning the optimal
    policy}
  \begin{algorithmic}
     \STATE {\bf Input:} Parameter space $\mathcal{X} \subseteq
     [0,1]^2$, Policy space $\mathcal{K} \subset \{0, 1, 2, \ldots\}$,
     Observation space $\mathcal{R} = \{0,1\}$, Prior probability
     distribution $Z_1$ over $(\mathcal{X}, \mathcal{B}(\mathcal{X}))$
  
 %     initial belief, $\pi_1 =1$ 
  
     \FOR {epoch $ \ell = 1, 2, \ldots$ }
     
     \STATE {\bf Sample} $(q_{\ell},\rho_{\ell}) \in \mathcal{X}$ according to the probability distribution $Z_{\ell}$
     \STATE {\bf Compute} the optimal policy $k_{\ell} = k_{\text{opt}}(q_{\ell}, \rho_{\ell})$ for sampled parameters
     \STATE {\bf Apply} the policy $k_{\ell}$ once: wait for the next $(k_{\ell}-1)$ time steps and sample the Markov chain at the $k_{\ell}$-th time instant
     \STATE {\bf Observe} reward on sampling, denote it by $R_{\ell} \in \{0,1\}$
     \STATE {\bf Update} the current prior over $(q, \rho)$ to
     \begin{eqnarray*}
       Z_{\ell+1}(B) := \frac{\int_B \prob{R = R_\ell \given (q, \rho), k_\ell} Z_\ell(q,\rho) dq \; d\rho}{\int_{\mathcal{X}} \prob{R = R_\ell \given (q, \rho), k_\ell} Z_\ell(q,\rho) dq \; d\rho}
     \end{eqnarray*}
     for any Borel set $B \in \mathcal{B}(\mathcal{X})$. 
 %    \STATE The new belief after recommending a song is $\pi_{t+1} = 1.$ 
 %    \STATE $t=t+1$ 
  
     \ENDFOR
     
  \end{algorithmic}
  \label{algo:TS}
\end{algorithm}
%
% Here, when $r \in \{0,1\}$ is the observed reward under policy $k$
% (i.e., after having waited for $k$ time steps before having sampled
% the chain), the likelihood of obtaining reward $r$ under the Markov
% chain with parameters $(q,\rho)$ is
Specifically, we have
\begin{eqnarray*}
\prob{R= r \given (q, \rho), k} = 
\begin{cases}
f(q,\rho,k)&  \mbox{if $r=1$} \\
1-f(q,\rho,k)& \mbox{if $r=0$},
\end{cases}
\end{eqnarray*}
where $f(q,\rho,k)$ is simply the probability of observing a
reward of $1$ after having waited for $k$ time steps since the last
sample, when the parameters are $(q,\rho)$. It follows that
\begin{eqnarray*}
f(q,\rho,k) = (1-q)^k \rho  + \left[1 - (1-q)^k \right].
\end{eqnarray*}

\section{Main Result -- Regret Bound}
\label{sec:regret-bound}
In this section, we show an analytical performance guarantee for
Algorithm~\ref{algo:TS}.
% by bounding a notion of {\em regret} -- the gap
% between the cumulative reward achieved by the algorithm and that
% achieved by the optimal policy specified by the waiting time
% $k_{\text{opt}}(q, \rho)$.

% We quantify in this section the performance of the online learning
% algorithm described in the previous section
% (Algorithm~\ref{algo:TS}). 

To this end, we consider a widely employed measure of performance from
online learning theory, namely {\em regret}
\cite{BookCBL,JakschOA10}. The regret of a strategy, for
the POMDP described by $(q^*, \rho^*)$, is the difference between the
cumulative reward which the optimal policy\footnote{We overload
  notation, when the context is clear, to represent the optimal policy
  using the optimal waiting time $k_{\text{opt}}(q^*,\rho^*)$.}
$k_{\text{opt}}(q^*,\rho^*)$ earns when run from a fixed initial state
for a fixed time horizon $T$, and that which the strategy earns with
the same initial state and time horizon. Formally, the regret for a
strategy $\mathcal{A}$ is the random variable
\[ R^{\mathcal{A}}_{(q^*,\rho^*)}(T) \bydef \sum_{t=0}^{T-1}
r\left(X_t,A_t^{k_{\text{opt}}(q^*,\rho^*)}\right) - \sum_{t=0}^{T-1}
r\left(X_t,A_t^{\mathcal{A}}\right), \]
where $A_t^{k_{\text{opt}}(q^*,\rho^*)}$ (resp. $A_t^{\mathcal{A}}$)
represents the action taken by $k_{\text{opt}}(q^*,\rho^*)$
(resp. $\mathcal{A}$) at time instant $t$, and it is assumed that the
algorithms $\mathcal{A}$ and $k_{\text{opt}}(q^*,\rho^*)$ are run on
independent POMDP instances. The goal is typically to bound the regret
of a sequential decision making algorithm as a function of the
structure of the POMDP (number of states/actions in the underlying
MDP) and show that it grows only sub-linearly with time $T$ (i.e., the
per-round regret vanishes), either in expectation or with high
probability. % over the two independent POMDPs and over the randomness of
%the algorithm $\mathcal{A}$.

Towards bounding the regret of the Thompson sampling POMDP
algorithm (Algorithm~\ref{algo:TS}), it is convenient to consider a
modified version of regret that essentially counts the number of
epochs during the operation of the algorithm in which the policy used
is not $k_{\text{opt}}(q^*,\rho^*)$, or in other words the length of
the epoch consisting of no-sampling instants is not
$k_{\text{opt}}(q^*,\rho^*)$. This corresponds to the quantity
\[ \tilde{R}_{(q^*,\rho^*)}(L) \bydef \sum_{\ell = 1}^{L} 1_{\{k_\ell
  \neq k_{\text{opt}}(q^*,\rho^*)\}}, \] defined for the first $L$
epochs that the algorithm executes. Note that under the reasonable
assumption that an upper bound $k_{\max}$ on
$k_{\text{opt}}(q^*,\rho^*)$ is available a priori, and if the
Thompson sampling algorithm samples at all times parameters $(q,\rho)$
for which $k_{\text{opt}}(q,\rho) \leq k_{\max}$, then the length of
each epoch is bounded between $1$ and $k_{\max}$; thus the standard
regret $R(T)$ is bounded in terms of the modified regret
$\tilde{R}(L)$ by a constant factor $k_{\max}$ (note that the maximum
possible reward is $1$). In order to focus on the order-wise scaling
of the regret with time or number of epochs, we henceforth concentrate
on bounding the (modified) regret $\tilde{R}(T)$, with high probability.

%
% In this section, we provide bound on number of plays of suboptimal
% action, this is also known as regret.  The number of plays of
% suboptimal action is denoted as $\sum_{t=1}^{T} 1_{\{a_t \neq k^*
%   \}},$ where $k^*$ represents the optimal policy under the true model
% $(q^*,\rho^*),$ i.e., $k^* = k_{opt}(q^*,\rho^*).$ 

We will need the following set of mild assumptions on the structure of
the parameter space and the initial prior under which a regret bound
holds.

\begin{assumption}
  \label{ass:1}
  (a) The parameter space $\mathcal{X} \subseteq [\eta, 1-\eta]$ for
  some $\eta \in \left(0, \frac{1}{2}\right)$, (b)
  $|\mathcal{X}| < \infty$, (c) The true model
  $(q^*, \rho^*) \in \mathcal{X}$, (d) The prior distribution $Z_1$
  over $\mathcal{X}$ puts positive mass on the true model, (e) There
  is a unique (average-reward) optimal policy
  $k_{\text{opt}}(q^*, \rho^*) \leq k_{\max}$ for the true model with
  a known upper bound $k_{\max} \in \mathbb{Z}$.
\end{assumption}

\begin{theorem}[Main Result -- Thompson sampling regret]
  Let Assumption~\ref{ass:1} hold, and let
  $\epsilon, \delta \in (0,1).$ %Let $k^*$ be the optimal unique
  % stationary policy for the true parameter POMDP $(q^*,\rho^*).$ 
  There exists $L_0 \equiv L_0(\epsilon)$ such that the following
  bound holds, for the cumulative regret of Algorithm~\ref{algo:TS}
  with initial prior $Z_1$, with probability at least $1-\delta$ for
  all $L \geq L_0$:
\begin{eqnarray*}
\tilde{R}_{(q^*,\rho^*)}(L) \leq B + C (\log L),
\label{eqn:regret-bound}
\end{eqnarray*}
where $B \equiv B(\epsilon, \delta, (q^*,\rho^*), \mathcal{X})$ is a
problem-dependent constant independent of the number of epochs $L$,
and $C \equiv C(\delta, (q^*,\rho^*), \mathcal{X})$ is the solution to
an optimization problem \eqref{eq:objective}.
\label{thm:regret-bound-Th} 
\end{theorem}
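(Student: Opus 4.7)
The plan is to bound $\tilde{R}_{(q^*,\rho^*)}(L) = \sum_{\ell=1}^L \mathbf{1}\{k_\ell \neq k^*\}$, where $k^* := k_{\text{opt}}(q^*,\rho^*)$, by controlling the posterior mass that $Z_\ell$ places on the set of ``bad'' models
\[ \mathcal{X}_{\text{bad}} := \{(q,\rho) \in \mathcal{X} : k_{\text{opt}}(q,\rho) \neq k^*\} = \bigsqcup_{k \neq k^*} \mathcal{X}_k, \]
with $\mathcal{X}_k := \{(q,\rho) : k_{\text{opt}}(q,\rho) = k\}$. Since the algorithm samples $(q_\ell,\rho_\ell) \sim Z_\ell$ and plays policy $k_{\text{opt}}(q_\ell,\rho_\ell)$, we have $\mathbb{P}[k_\ell \neq k^* \mid Z_\ell] = Z_\ell(\mathcal{X}_{\text{bad}})$, so the strategy reduces to showing that this posterior mass becomes small quickly enough.

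The first step is to write down the posterior update after $L$ epochs: up to normalization,
\[ Z_{L+1}(q,\rho) \propto Z_1(q,\rho) \prod_{\ell=1}^L f(q,\rho,k_\ell)^{R_\ell} \bigl(1-f(q,\rho,k_\ell)\bigr)^{1-R_\ell}. \]
For any fixed bad model $(q,\rho) \in \mathcal{X}_k$ with $k \neq k^*$, I would show by a likelihood-ratio/martingale argument that whenever policy $k$ has been played $N_k(L)$ times in the first $L$ epochs, the ratio $Z_{L+1}(q,\rho)/Z_{L+1}(q^*,\rho^*)$ is bounded above with high probability (via a Chernoff-Hoeffding/Azuma concentration on the log-likelihood-ratio sums) by $\exp(-N_k(L)\,d_k + O(\sqrt{N_k(L) \log(1/\delta)}))$, where $d_k := D_{\mathrm{KL}}\!\bigl(\mathrm{Ber}(f(q^*,\rho^*,k)) \,\|\, \mathrm{Ber}(f(q,\rho,k))\bigr)$. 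Because $\mathcal{X}$ is finite and $k^*$ is unique (Assumption~\ref{ass:1}), the uniqueness-of-optimum structure combined with the identifiability of $(q,\rho)$ from the function $k \mapsto f(q,\rho,k)$ implies $d_k > 0$ for the pair of policies $(k, k^*)$ that matters, so each bad model is genuinely distinguishable from $(q^*,\rho^*)$ under some policy that the algorithm is forced to play whenever it samples that model.

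Next I would set up the bookkeeping between $\tilde{R}(L)$ and the $N_k(L)$'s: since $\tilde{R}(L) = \sum_{k \neq k^*} N_k(L)$, it suffices to bound each $N_k(L)$. I would follow the posterior-concentration route standard in Thompson-sampling-for-structured-bandits analyses (closely analogous to the arguments in Gopalan--Mannor for MDPs, adapted via the equivalent policy form of Section~\ref{sec:equivopt}). Define, for each $k \neq k^*$,
\begin{equation}
N^*_k := \min_{n \in \mathbb{N}}\; n \quad \text{s.t.}\quad \min_{(q,\rho) \in \mathcal{X}_k} n \cdot d_k \;\geq\; \log(L/\delta') + c,
\label{eq:objective}
\end{equation}
with $\delta'$ chosen by a union bound over epochs and models. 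A recursive argument then shows that once $N_k(L) \geq N^*_k$, the conditional probability that $k_\ell = k$ in any subsequent epoch is at most $1/L$, so beyond a burn-in the additional contribution of policy $k$ to the regret is $O(1)$ in expectation and $O(\log L)$ with probability $1-\delta$ after a union bound over the $|\mathcal{X}|$ models and $L$ epochs. The constant $C$ in the statement is then the solution to the optimization over the minimum KL gaps $d_k$ appearing in \eqref{eq:objective}, and $B$ absorbs the $(q^*,\rho^*)$- and $\mathcal{X}$-dependent burn-in term $L_0$ and the prior mass $Z_1((q^*,\rho^*)) > 0$ guaranteed by Assumption~\ref{ass:1}(d).

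The main obstacle, I expect, is the posterior-recursion step: unlike classical stochastic bandits where each arm's posterior evolves independently, here a single reward $R_\ell$ simultaneously updates the posteriors of all models, and moreover the policy $k_\ell$ is selected according to the very posterior we are trying to control. Carefully decoupling these dependencies --- typically by conditioning on the filtration up to each epoch and defining a ``good event'' on which all the likelihood-ratio martingales stay within their Chernoff envelopes simultaneously --- is the delicate part. A secondary subtlety is handling models in $\mathcal{X}_{k^*} \setminus \{(q^*,\rho^*)\}$: these contribute no regret when sampled, but their posterior mass behavior must still be consistent with the concentration argument, which is ensured by the fact that they produce the same reward law as $(q^*,\rho^*)$ only under policy $k^*$ and not under the off-policy excursions forced by other models being sampled.
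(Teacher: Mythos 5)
Your overall framework (reduce regret to posterior mass on $\{(q,\rho): k_{\text{opt}}(q,\rho)\neq k^*\}$, control it by likelihood-ratio concentration, count plays until elimination) is the right family of argument, but there is a genuine gap at the step where you claim that for each bad model $(q,\rho)\in\mathcal{X}_k$ the relevant divergence $d_k = D\bigl(\mathrm{Ber}(f(q^*,\rho^*,k))\,\|\,\mathrm{Ber}(f(q,\rho,k))\bigr)$ is strictly positive ``for the pair of policies that matters,'' so that plays of $k$ alone drive its posterior down. Identifiability of $(q,\rho)$ from the whole map $k\mapsto f(q,\rho,k)$ does not give positivity at the particular $k$ that the algorithm is forced to play when it samples that model: since $f(q,\rho,k)=1-(1-q)^k(1-\rho)$, there is a whole curve of parameters matching the true model's reward law at any single $k$, and such a confounding model may well have $k$ as its own optimal policy. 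For such a model your exponent $-N_k(L)\,d_k$ is zero, your threshold $N_k^*$ defined through $\min_{(q,\rho)\in\mathcal{X}_k} n\,d_k \geq \log(L/\delta')$ is infinite, and the recursion collapses. The information that eliminates such a model comes from plays of \emph{other} policies (in this problem, from $k^*$ itself, since at most one $k$ can be confounded per model --- the content of Lemma~\ref{lemma:comp-var-dist}), i.e., the correct exponent is the cross-policy sum $\sum_{k'}N_{k'}(L)\,D(f(q^*,\rho^*,k')\|f(q,\rho,k'))$, and those plays of $k^*$ are generated by the posterior sampling \emph{other} (near-true) models --- a coupling your per-policy bookkeeping never addresses.

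This is exactly the structure the paper's proof is built around and that your proposal omits: each decision region $S_k$ is split into $S_k''$ (models distinguishable from the truth under the optimal policy $k^*$), whose posterior decays exponentially from the on-policy plays of $k^*$ and contributes only $O(1)$ regret, and $S_k'$ (models confounded with the truth under $k^*$), which can only be killed by regret-incurring plays of suboptimal policies; the total number of such plays is then bounded by a sequential-elimination argument culminating in the constrained optimization \eqref{eq:objective}, whose constraints are inner products of play-count vectors with the full vector of marginal KL divergences. Consequently your constant $C$ (a sum of per-policy thresholds $\log L/d_k$) is not the paper's $C$: besides being possibly infinite as noted above, even when all $d_k>0$ it would scale like $O(k_{\max}\log L)$, which is precisely the naive bandit-over-policies bound that Theorem~\ref{thm:reduction} is designed to beat. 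To repair the proof you would need either the $S_k'/S_k''$ decomposition with the cross-policy KL accounting of \eqref{eq:objective}, or an explicit problem-specific lemma (in the spirit of Lemma~\ref{lemma:comp-var-dist}) showing how confounded models are eliminated by plays they do not themselves trigger.
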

%thm:
{\em Note.} The optimization problem is described in detail in
Appendix \ref{proof:regret-bound-Th} for the sake of clarity.

\noindent {\em Discussion.} Theorem~\ref{thm:regret-bound-Th}
establishes that the regret of Algorithm~\ref{algo:TS} scales only
logarithmically (thus, sub-linearly) with time (or epochs), with high
probability, when starting with a `grain-of-truth' prior that ascribes
positive probability to the true model. The algorithm is thus able to
achieve a suitable balance between exploring across different sampling
policies and exploiting its improving knowledge about the true model
$(q^*, \rho^*)$ to keep the regret controlled, in a nontrivial
fashion. Moreover, this is achieved in a POMDP model in which the
state of the Markov chain is never available at any time instant, but
instead only a stochastic reward correlated with the current state is
observed that conveys implicit information about the true parameter
$(q^*, \rho^*)$. The logarithmic growth of the regret with time is
controlled by the quantity $C$, which depends on Kullback-Leibler (KL)
divergences of the distribution of the observations under different
models and policies, thus encoding the information structure of the
observations.

The theorem is proven by following closely the method developed to
show \cite[Theorem $1$ and Proposition $2$]{GopManMan14:thompson} and
\cite[Theorem $1$ and $5$]{Gopalan15}, namely the strategy of bounding
the posterior mass (with high probability) both from below (in a
neighborhood of the true model) and from above (outside the
neighborhood, for parameters corresponding to suboptimal policies)
spelt out in detail in \cite[Appendix A]{GopManMan14:thompson}. We
describe the derivation of Theorem~\ref{thm:regret-bound-Th} in
Appendix~\ref{proof:regret-bound-Th}. 

% Specifically, the proof of
% Theorem~\ref{thm:regret-bound-Th} follows from , and is omitted due to space constraints.
% As an aid to understanding the result, however, we
% provide a sketch of the proof in Section \ref{sec:overviewproof}
% below.
 %
% \begin{theorem}[Dependence of Regret on Parameter Structure]
% Consider $L$ to be large enough such that 
% \begin{eqnarray*}
% \max_{(q,\rho) \in \mathcal{X}, k \in \mathcal{K}} D(f(q^*,\rho^*,k)||f(q,\rho,k)) \leq \frac{1+\epsilon}{1-\epsilon} \log L. 
% \end{eqnarray*}
% Suppose that $(q,\rho) \neq (q^*,\rho^*),$
% $(q,\rho) \neq N_{\epsilon_1}(q^*,\rho^*).$ Then there exists an
% integer $L = k_{\max}-1$ \todo{resolve conflicted notation $L$} and
% $\Delta_2>0$ such that for every $k \neq k^*,$ and
% $(q,\rho) \in S_k^{'}-N_{\epsilon_1}(q^*,\rho^*):$
% \begin{eqnarray*}
% \big\vert \{k \in \mathcal{K}: k \neq k^*, D(f(q^*,\rho^*,k)||f(q,\rho,k)) \geq \Delta_2 \} \big\vert 
% = L
% \end{eqnarray*} 
% Also, $C(\log T) \leq \left(\frac{1}{\Delta_2} \right) \frac{2(1+\epsilon)}{1-\epsilon} \log T.$
% \end{theorem}

The following accompanying result provides more insight into the
order-wise scaling of regret. In the following,
$D(p||q) \bydef p \log \left(\frac{p}{q} \right) + (1-p) \log
\left(\frac{1-p}{1-q} \right)$
denotes the KL divergence between Bernoulli distributions of parameter
$p$ and $q$, $0 < p, q < 1$.

\begin{theorem}
\label{thm:reduction}
Consider $L$ to be large enough so that
\begin{eqnarray*}
\max_{(q,\rho) \in \mathcal{X}, k \leq k_{\max}} D(f(q^*,\rho^*,k)||f(q,\rho,k)) \leq \frac{1+\epsilon}{1-\epsilon} \log L. 
\end{eqnarray*}
Then, there exists $\Delta_2 > 0$ such that
\[ C(\log L) \leq \left(\frac{1}{\Delta_2} \right)
\frac{2(1+\epsilon)}{1-\epsilon} \log L.\]
\end{theorem}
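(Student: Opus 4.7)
The plan is to upper-bound the value of the optimization problem~\eqref{eq:objective} defining $C(\log L)$ by exhibiting an explicit feasible point with the advertised objective value. Following the template of~\cite{GopManMan14:thompson,Gopalan15} on which Theorem~\ref{thm:regret-bound-Th} is built, this optimization is essentially a linear program in non-negative ``suboptimal-policy counts'' $\{n_k\}_{k \neq k_{\text{opt}}(q^*,\rho^*)}$ that minimizes $\sum_k n_k$ subject to one information-theoretic feasibility constraint per alternative parameter $(q,\rho) \in \mathcal{X}$ with $k_{\text{opt}}(q,\rho) \neq k_{\text{opt}}(q^*,\rho^*)$, of the form
\begin{equation*}
\sum_{k \leq k_{\max}} n_k \, D\bigl(f(q^*,\rho^*,k) \, \| \, f(q,\rho,k)\bigr) \ \geq \ \frac{2(1+\epsilon)}{1-\epsilon}\,\log L.
\end{equation*}

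The first step is to isolate the distinguishability gap
\begin{equation*}
\Delta_2 \ := \ \min_{\substack{(q,\rho)\in\mathcal{X}\\ k_{\text{opt}}(q,\rho)\neq k_{\text{opt}}(q^*,\rho^*)}} \ \max_{1 \leq k \leq k_{\max}} D\bigl(f(q^*,\rho^*,k) \, \| \, f(q,\rho,k)\bigr),
\end{equation*}
and to verify $\Delta_2 > 0$. Since $\mathcal{X}$ is finite and contained in $[\eta,1-\eta]^2$ by Assumption~\ref{ass:1}, each $f(q,\rho,k)$ is bounded away from $\{0,1\}$, so the KL divergences in the display are finite; and the uniqueness in Assumption~\ref{ass:1}(e) ensures that any $(q,\rho)$ inducing a different optimal policy differs from $(q^*,\rho^*)$ on the Bernoulli reward mean $f(\cdot,\cdot,k)$ for at least one $k \leq k_{\max}$, so the inner maximum is strictly positive.

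I would then build the candidate feasible point by assigning, for each alternative $(q,\rho)$, mass $\frac{2(1+\epsilon)}{(1-\epsilon)\Delta_2}\log L$ to the index $k^\star(q,\rho)$ that attains the maximum in $\Delta_2$, and taking the coordinate-wise maximum of these allocations across $(q,\rho)$. By definition of $\Delta_2$, each LP constraint is satisfied since its single active summand alone contributes at least $\frac{2(1+\epsilon)}{(1-\epsilon)\Delta_2}\log L \cdot \Delta_2 = \frac{2(1+\epsilon)}{1-\epsilon}\log L$, and the hypothesis $\max_{(q,\rho),k} D \leq \frac{1+\epsilon}{1-\epsilon}\log L$ is exactly what makes these allocations compatible with the per-epoch bookkeeping in the proof of Theorem~\ref{thm:regret-bound-Th}. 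The total $\sum_k n_k$ collapses, through the coordinate-wise-max construction, to the claimed bound $\frac{1}{\Delta_2}\cdot\frac{2(1+\epsilon)}{1-\epsilon}\log L$.

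The main obstacle I anticipate is verifying that this candidate is genuinely feasible for~\eqref{eq:objective} as it emerges from the Thompson sampling regret decomposition, rather than only for a one-constraint-at-a-time relaxation. The posterior-update coupling across epochs may impose additional joint constraints among the $n_k$ that share a common maximizer $k^\star$; handling these would likely require either a more delicate allocation of mass among the coordinates, or replacing primal feasibility with a weak-duality certificate built from dual multipliers peaked on each $k^\star(q,\rho)$. The hypothesis on $L$ then plays a clean residual role: it is precisely the regime in which the additive constant $B$ of Theorem~\ref{thm:regret-bound-Th} becomes negligible next to $C\log L$, so that no sub-$\log L$ corrections contaminate the leading coefficient.
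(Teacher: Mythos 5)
Your proposal goes in a direction that does not match the structure of the problem and, more importantly, misses the one idea the theorem actually rests on. First, the program \eqref{eq:objective} defining $C(\log L)$ is a \emph{maximization} over worst-case elimination schedules (the play-count vectors $Y_i$ and the elimination order $\sigma$), not a covering LP that ``minimizes $\sum_k n_k$'' subject to per-confounder information constraints. Exhibiting one feasible point therefore gives a \emph{lower} bound on $C(\log L)$, not the upper bound you need; to upper-bound a max you must argue over all feasible points, which is exactly what the cited \cite[Proposition 2]{GopManMan14:thompson} does. Your ``coordinate-wise max collapses to $\frac{1}{\Delta_2}\cdot\frac{2(1+\epsilon)}{1-\epsilon}\log L$'' step is also unjustified even on its own terms: distinct confounders $(q,\rho)$ can have distinct maximizing indices $k^\star(q,\rho)$, in which case your allocation places mass $\frac{2(1+\epsilon)}{(1-\epsilon)\Delta_2}\log L$ on several coordinates and the total is a multiple of the claimed quantity.

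Second, your choice $\Delta_2 := \min_{(q,\rho)}\max_k D(f(q^*,\rho^*,k)\|f(q,\rho,k))$ certifies only \emph{one} informative policy per confounding model. Fed into the elimination machinery, that yields at best a bound of order $\frac{k_{\max}}{\Delta_2}\log L$ — precisely the UCB-type scaling with $k_{\max}$ that the theorem is designed to beat (see the discussion following Theorem~\ref{thm:reduction}). The paper's proof instead exploits the specific form $f(q,\rho,k)=1-(1-q)^k(1-\rho)$: writing $d_k(q,\rho)=\bigl[\overline{q}^k(\rho-1)-\overline{q^*}^k(\rho^*-1)\bigr]^2$, Lemma~\ref{lemma:comp-var-dist} shows that $d_k(q,\rho)=d_{k'}(q,\rho)=0$ for $k\neq k'$ would force $\overline{q}^{\,k-k'}=\overline{q^*}^{\,k-k'}$, which is impossible for $(q,\rho)$ bounded away from $(q^*,\rho^*)$; hence at most one coordinate of the divergence vector can be small, and Lemma~\ref{lemma:size-set-bound} turns this into a uniform lower bound $\Delta>0$ on $k_{\max}-1$ coordinates, i.e.\ $\kappa=k_{\max}-1$. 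Via the Pinsker-type inequality \eqref{eq:KL-var-dist} this gives $\Delta_2=\Delta/\ln 2$, and \cite[Proposition 2]{GopManMan14:thompson} then bounds $C(\log L)\leq\frac{k_{\max}-\kappa}{\Delta_2}\cdot\frac{2(1+\epsilon)}{1-\epsilon}\log L$, which with $\kappa=k_{\max}-1$ is the stated result. Your argument never establishes this ``all but one policy is informative'' property, and without it the $\frac{1}{\Delta_2}$ (rather than $\frac{k_{\max}}{\Delta_2}$) coefficient cannot be obtained.
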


{\em Discussion.} Theorem~\ref{thm:reduction} highlights a key
property of the regret induced by the information structure of the
POMDP problem -- that the regret asymptotically does not scale with
the total number $k_{\max}$ of candidate optimal policies, which could
be large by itself. This can be contrasted with running a simple
multi-armed bandit algorithm such as UCB \cite{AuerCF02} with the
`arms' being different waiting-duration policies with the duration
ranging from $0, 1, \ldots, k_{\max}$ (a total of $ k_{\max} + 1$
arms), and the reward being the reward from applying a single cycle of
any such policy while disregarding the POMDP structure entirely. It
follows from standard stochastic bandit regret bounds that such an
algorithm would achieve regret that scales with the total number of
arms, i.e., $O(k_{\max} \log T)$. The advantage of using Thompson
sampling with a prior on POMDP structures $(q, \rho)$ is that every
application of any waiting-time policy provides a non-trivial amount
of information (in the sense of the prior-posterior update) about the
true POMDP $(q^*, \rho^*)$, and hence about all other policies (in the
multi-armed bandit view this is akin to any arm providing reward
information about all arms following every pull). The proof of the
result is motivated by \cite[Proposition 2]{GopManMan14:thompson}, and
is detailed in the appendix.

\section{Numerical Results and Discussion}
\label{sec:numerical-discuss}

\begin{figure*}
  \begin{center}
    \begin{tabular}{cccc}
      \includegraphics[scale=0.35]{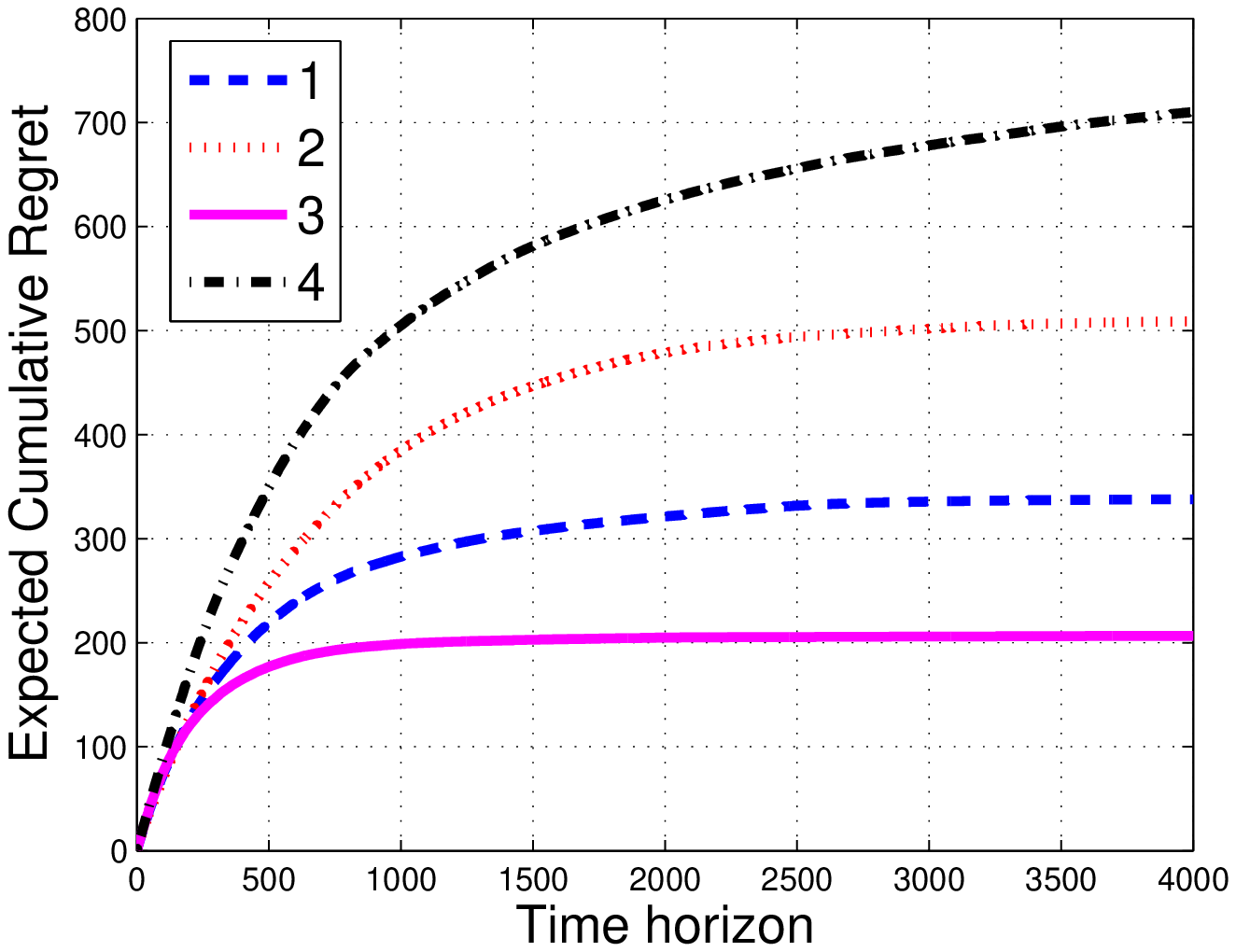}
      & 
      \includegraphics[scale=0.35]{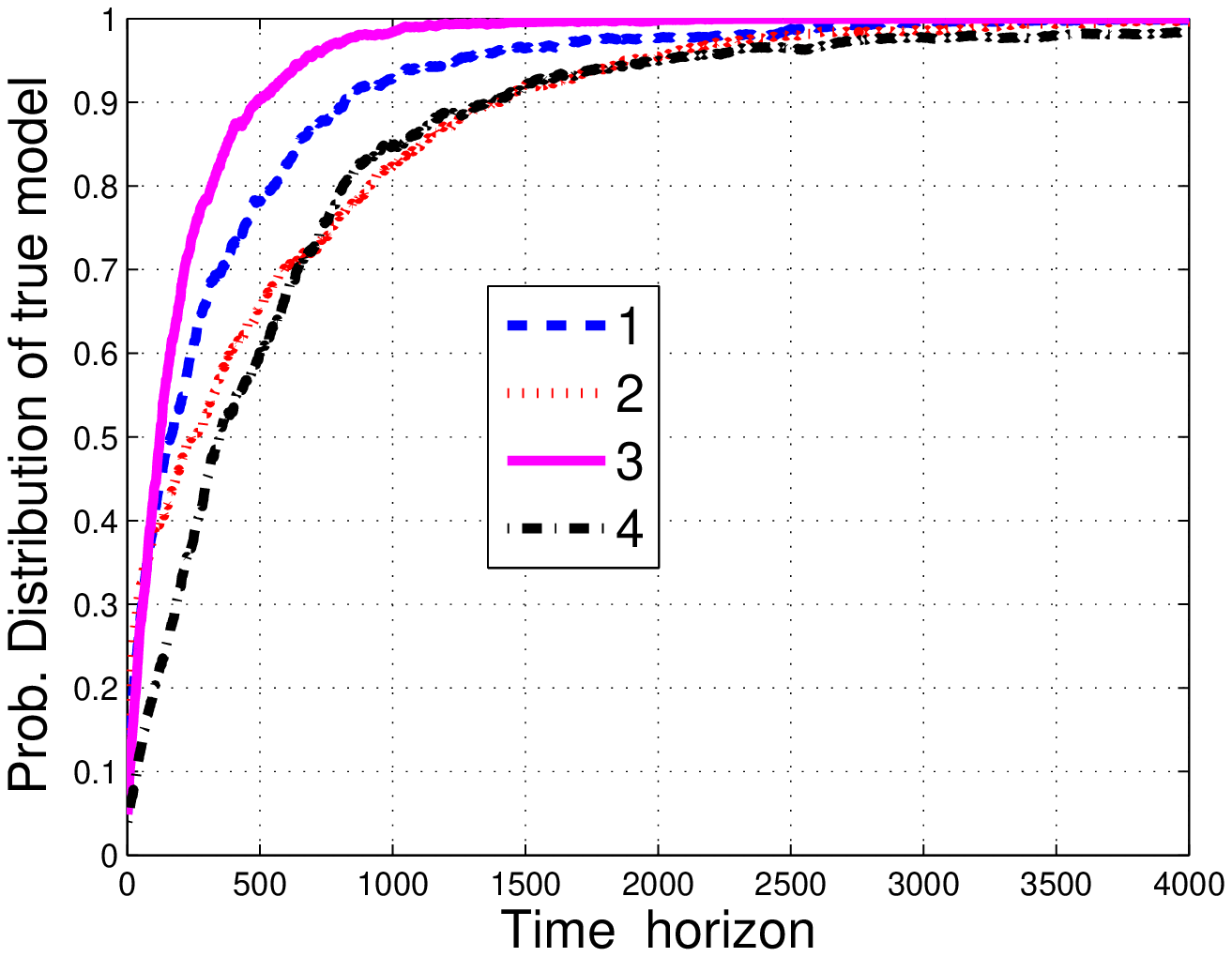}  
     \end{tabular}
  \end{center}
  \caption{Regret vs time horizon and probability mass on the true
    value vs time horizon for coarse grid. }
  \label{Fig-Regret1-Prob1}
\end{figure*}
\begin{figure*}
  \begin{center}
    \begin{tabular}{cccc}
      \includegraphics[scale=0.35]{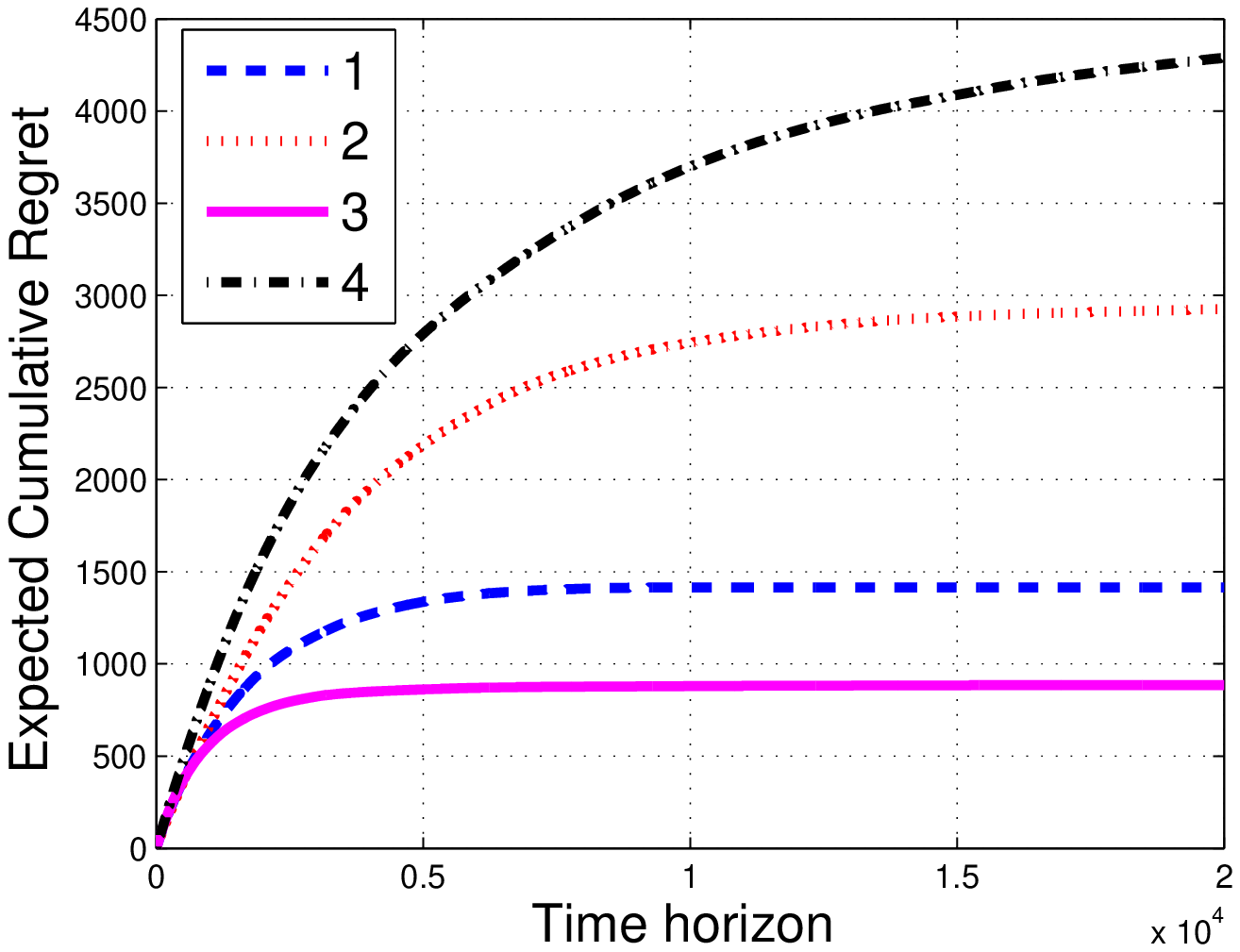}
      & 
      \includegraphics[scale=0.35]{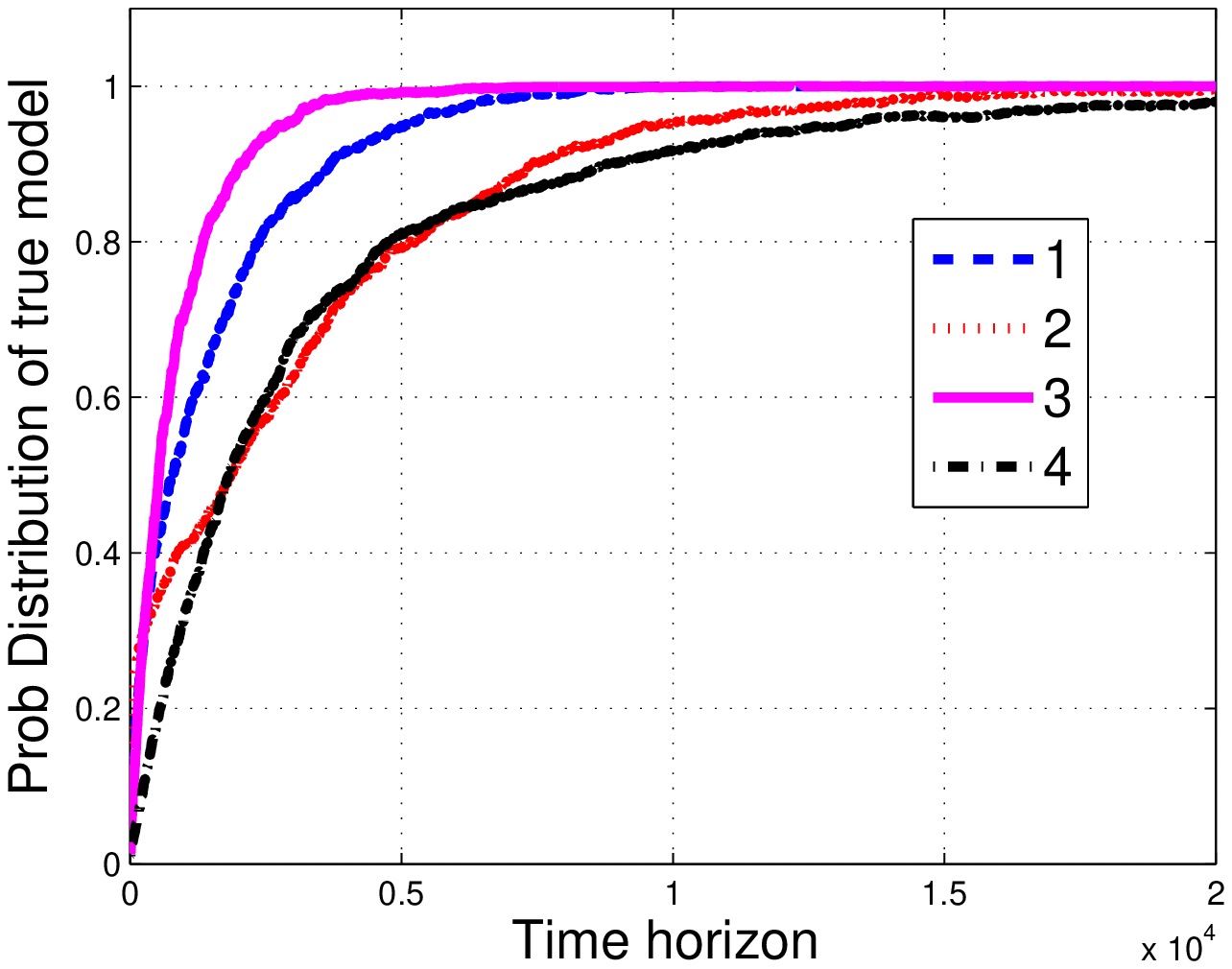}  
     \end{tabular}
  \end{center}
  \caption{Regret vs time horizon and probability mass on the true
    value vs time horizon for finer grid. }
  \label{Fig-Regret2-Prob2}
\end{figure*}
We present some numerical results to show the rate of convergence of
Algorithm~\ref{algo:TS} to the optimal policy of the POMDP model, for
various configurations of the true model and initial prior. We fix
$\lambda=0.3$ in all of the simulations and consider four combinations
of the true models $(q^*,\rho^{*});$ (1)~(0.05,0.25), \
(2)~(0.05,0.15), \ (3)~(0.05,0.35), and (4)~(0.15,0.35). For each of
these four values of the true parameters, we present two performance
measures as a function of the time step---the regret and also
probability mass on the true values. These values are average from 300
runs of the simulation.  The first set of plots (shown in
Fig.~\ref{Fig-Regret1-Prob1}) are obtained by discretising the
parameter space coarsely into a $(5 \times 5)$ grid at
$(0.05, 0.15, 0.25, 0.35, 0.45)$ and starting with the uniform a
distribution on the 25 points. The second set of plots (shown in
Fig.~\ref{Fig-Regret2-Prob2}) is obtained by using a finer
$10 \times 10$ grid at $(0.05, 0.10, \ldots, 0.50).$

We observe that true model and the initial prior (i.e., supported on
the coarse/fine grid) both affect the convergence rate of regret and
probability distribution of true model. The effect of the prior on the
finer grid is to increase the overall regret and slow down the
convergence to the true model. This is presumably due to (a) the fact
that imposing a prior over a fairly coarse grid is equivalent to
providing a large amount of side information about the true model
(i.e., that it must be one of a small set of models), and on a related
note, (b) the presence of confounding or competing models that are
closer to it than in the coarse grid prior, which must be eliminated
to achieve low regret (this is analogous to the phenomenon of smaller
`gap' in multi-armed bandits leading to higher regret).

\subsection{Discussion and Directions -- Multi-armed bandit case}
We formulated the problem of optimizing recommendations for a single
user, whose taste in a certain item changes with recommendations, as
online learning in a two-state, two-parameter POMDP. Using this
approach, we developed and analyzed the performance of a natural
Thompson-sampling algorithm for learning the optimal policy for a
user-item pair. A logical next step in this investigation is to treat
the multi-armed bandit version of the problem with {\em multiple}
independently evolving POMDPs, each representing different
users/items, and a resource constraint on which users/items can be
activated at any instant, e.g., decide which of several items is to be
shown to a user at a certain time, given that the user remembers how
far ago she consumed a certain item and may respond accordingly to
item recommendations.

The Thompson sampling based algorithm proposed in this work could be
extended to cover the multi-armed bandit case by jointly sampling
parameters for all POMDPs, computing the optimal refresh rate for each
of them, and scheduling the recommendations accordingly while at the
same time updating its current prior to incorporate observations. This
opens up new avenues for analysis of performance for such algorithms,
and we plan to pursue it as part of future work.

\bibliographystyle{IEEEbib}

\bibliography{restless-bandits}

\appendix

\subsection{Proof of Theorem \ref{thm:regret-bound-Th}}
\label{proof:regret-bound-Th}
We sketch how the proof of the result can be adapted from that of
\cite[Theorem $1$]{GopManMan14:thompson}; due to space constraints the
reader is referred to \cite{GopManMan14:thompson,Gopalan15} for
precise estimates and details. %
% For precise details, see
% \cite[Appendix A]{GopManMan14:thompson}.  
We first define the decision regions based on KL-divergence for each
policy $k.$ Let $ S_k := \left\{ (q,\rho) \in \mathcal{X} :
  k_{opt}{(q,\rho)} = k \right\} $ be the collection of all models
$(q,\rho) \in \mathcal{X}$ for which the optimal policy is $k.$ Denote
$k^* := k_{\text{opt}}(q^*, \rho^*)$, let $\epsilon >0,$ and define
the following sub-decision regions $\forall k \neq k^*$: {\small{
\begin{eqnarray*} 
S_k^{'} := S_k^{'}(\epsilon) =  \left\{ (q,\rho) \in S_k: D\left(f(q^*,\rho^*,k^*)||f(q,\rho,k^*)\right) \leq \epsilon \right\} \\
S_k^{''} := S_k \setminus S_k^{'} = \left\{ (q,\rho) \in S_k:D\left(f(q^*,\rho^*,k^*)||f(q,\rho,k^*)\right) > \epsilon \right\}.
\end{eqnarray*}
}} Let $N_k(l) = \sum_{i=1}^{l} 1_{\{ (q_i,\rho_i) \in S_k \}}$ be the
number of times up to and including epoch $l$ for which the policy
employed by Algorithm~\ref{algo:TS} is $k.$ Also, $N_k(l) =
\sum_{i=1}^{l} 1_{\{ (q_i,\rho_i) \in S_k^{'}\}} + \sum_{i=1}^{l}
1_{\{(q_i,\rho_i) \in S_k^{''}\}}$.  Define $N_k^{'}(l) :=
\sum_{i=1}^{l} 1_{\{ (q_i,\rho_i) \in S_k^{'}\}}$ and $N_k^{''}(l) :=
\sum_{i=1}^{l} 1_{\{(q_i,\rho_i) \in S_k^{''}\}}.$ Next, it can be
shown that the posterior on $S_k^{''}$ decays exponentially with $t$,
leading to a negligible, i.e., $O(1)$, regret from $S_k^{''}.$
%Note that  KL-divergence between true model and other model in $S_k^{'}$ is not separable for $k \neq k^*.$
To obtain posterior distribution on $ S_k^{'}$ to be small, we need
$\sum_{k}^{k_{\max}} N_k(l) D\left(f(q^*,\rho^*,k) || f(q,\rho,k)
\right) \approx \log L. $ This mean that suboptimal models are sampled
as long as their posterior probability mass is greater than
$\frac{1}{L}.$
%For sufficiently large $L,$ $Z_{l+1}(q,\rho)$ will be very small.
If the posterior probability of a model parameter is less than
$\frac{1}{L},$ then the number of times that parameter sampled up to
epoch $L$ is $O(1)$ and this is negligible compared to regret. It is
thus enough to bound the maximum amount of time that the posterior
probability of any $S_k^{'}$, $k \neq k^*$, can stay above $1/L$, when
non-trivial regret is incurred. We now define $N^{'}(l) := \left(
  N_k^{'}(l) \right),$ at $l \geq 0,$ $N^{'}(0)= (0,\cdots,0).$ The
policy $k$ is eliminated when all its model losses exceed $\log L$.
Let $\tau_1$ be the first time when some policy $k_1$ is eliminated,
$k_1 \neq k^*.$
%,i.e., $ N_k^{'}(\tau_1) D\left(f(q^*,\rho^*,k)||f(q,\rho,k)\right) \approx \log L.$
The play count of policy $k_1$ fixed at $N_{k_1}^{'}(\tau_1)$ for
remaining horizon up to $L.$ Next $\tau_2 \geq \tau_1$ when policy
$k_2 \notin \{k^*,k_1 \}$ is eliminated and play count of $k_2$ fixed
at $N_{k_2}^{'}(\tau_2).$ This process goes on until all suboptimal
policies eliminated.  $N^{'}(\tau_i) = \left( N_{k}^{'}(\tau_i)
\right)_{\{k =1 , \cdots k_{\max} \}}$ is play count vector of all
policies at time $\tau_i.$ Let $Y_i := N^{'}(\tau_i) = \left(
  N_{k}^{'}(\tau_i) \right)_{\{k =1 , \cdots k_{\max} \}}.$ Since the
play count of policy $k_i$ fixed at $N_{k_i}^{'}(\tau_i)$ for
remaining horizon, we have constraints $Y_i(k_j) = Y_j(k_j),$ for $i
\geq j.$ That means plays of policy $k_j$ do not occur after time
$\tau_j.$
 %We want to get 
%\begin{eqnarray*}
% \sum_{k \in \mathcal{K}}N_k^{'}(t) D\left(f(q^*,\rho^*,k)||f(q,\rho,k)\right) \approx \log T.
%\end{eqnarray*}
Let $D(f(q,\rho)) := \left( D\left(f(q^*,\rho^*,k)||f(q,\rho,k)\right)
\right)_{ \{k =1 , \cdots k_{\max} \}}$ is a vector of the marginal
Kullback-Leibler divergences for all policies.  As the policy $k_i$
eliminated at time $\tau_i,$ this translates into the following
problem: $\min_{(q,\rho)\in S_{k_i}^{'}} \ip{Y_i}{D(f(q,\rho))} \geq
\frac{1+\epsilon}{1-\epsilon}\log L$, where $\ip{x}{y}$ denotes the
standard inner product in Euclidean space. We summarize the discussion
on the elimination of suboptimal policies in the following constrained
optimization problem that depends on the marginal KL divergences.
{\small{
\begin{equation}
  \label{eq:objective}
  \tag{P1}
  \begin{array}{rll}
  & C(\log L):= \\
     & \max \displaystyle  \sum_{i=1}^{{k}_{\max}-1} Y_i(k_i)
    &\\
    \textrm{s.t.} & \displaystyle Y_i \in \mathbb{R}_{+}^{k_{\max}}, \ \ i=1,\cdots, k_{\max}-1 \\
   & \displaystyle   Y_i(k_{\max} ) = 0, \ \   k=1,\cdots, k_{\max}-1 \\
   & \displaystyle Y_i \geq Y_j, \ \ i \geq  j, \ j =1 \cdots, k_{\max}-1 \\
   & \displaystyle Y_i(j) = Y_j(j), \ \ i \geq j, \ j= 1, \cdots, k_{\max}-1 \\
   & \displaystyle \sigma: \{ 1, \cdots, k_{\max}-1\} \rightarrow  \{ 1, \cdots, k_{\max}\} -\{k^*\} \text{ injective } \\
   & \displaystyle \min_{(q,\rho) \in S^{'}_{\sigma(i)}} \ip{ Y_i}{ D(f(q,\rho))} 
%\sum_{k=1}^{k_{\max}-1} Y_{k}(k)  D_{k}\left((q^*,\rho^*) ||(q,\rho) \right)
  \displaystyle = \frac{1+\epsilon}{1-\epsilon} \log L, \ \\
   &  i=1, \cdots, k_{\max} -1.
    \end{array}
\end{equation}
}}
\vspace{-\baselineskip}
\subsection{Preliminary Results Towards Proving Theorem \ref{thm:reduction}}
We collect here some useful assertions towards showing the result.  
We first note that the variation distance provides a lower bound on
KL-divergence and it is given as
{\small{
\begin{eqnarray}
D(f(q^*,\rho^*,k)||f(q,\rho,k)) &\geq & \frac{1}{2 \ln 2} d^2(f(q^*,\rho^*,k), f(q,\rho,k)) \nonumber \\
&\geq & \frac{1}{\ln 2} d_k(q,\rho)
\label{eq:KL-var-dist}
\end{eqnarray}
}}
Here, $d(f(q^*,\rho^*,k), f(q,\rho,k))$ is variation distance between $f(q^*,\rho^*,k)$ and $f(q,\rho,k)$ and this is described as follows.
\begin{eqnarray*}
d(f(q^*,\rho^*,k), f(q,\rho,k)) =
2 \big \vert f(q^*,\rho^*,k) - f(q,\rho,k) \big \vert 
\end{eqnarray*}
We can rewrite ${\big \vert f(q^*,\rho^*,k) - f(q,\rho,k) \big \vert}^2$  as follows.
\begin{eqnarray*}
{\big \vert f(q^*,\rho^*,k) - f(q,\rho,k) \big \vert}^2 =
\left[ \overline{q}^k (\rho -1) - \overline{q^*}^k(\rho^*-1) \right]^2,
\end{eqnarray*}
where $\overline{q} = 1- q,$ and $\overline{q^*} = 1- q^*.$
Define  
$
d_k(q,\rho) := \left[ \overline{q}^k (\rho -1) - \overline{q^*}^k(\rho^*-1) \right]^2,
%\label{eq:var-dist-def}
%\end{eqnarray}
$
and $d(q,\rho) := \left[ d_1(q,\rho),\cdots, d_{k_{\max}}(q,\rho)\right].$ We need the following series of lemmas to 
prove Theorem \ref{thm:reduction}.
\begin{lemma}
For every $\epsilon > 0,$ there exists $\delta>0$ such that if $(q, \rho)$ and $(q^*,\rho^*)$ sufficiently away  and 
$
D(f(q^*,\rho^*,k^*) || f(q,\rho,k^*)) \leq \epsilon
$
then
$
D(f(q^*,\rho^*,k) || f(q,,\rho, k)) \geq \delta.
$
\label{lemma:kl-lower-bound}
\end{lemma}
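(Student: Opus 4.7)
The plan is to combine three ingredients: a standard KL-to-variation comparison for Bernoulli distributions, an injectivity argument for the map $(q,\rho) \mapsto (f(q,\rho,k))_{k \geq 1}$, and a finiteness argument exploiting Assumption~\ref{ass:1}(b). I will use the notation $d_k(q,\rho) = \left[\overline{q}^k(\rho-1) - \overline{q^*}^k(\rho^*-1)\right]^2$ already introduced just before the lemma.

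First I would convert the hypothesis on KL divergence into a geometric statement. Because Assumption~\ref{ass:1}(a) forces every $(q,\rho) \in \mathcal{X}$ into $[\eta, 1-\eta]^2$, one checks that $f(q,\rho,k) = 1 - \overline{q}^k(1-\rho)$ lies in an interval bounded away from $\{0,1\}$ uniformly in $k \leq k_{\max}$. On such an interval the Bernoulli KL divergence is sandwiched between constant multiples of the squared parameter gap, so in particular the hypothesis $D(f(q^*,\rho^*,k^*)\|f(q,\rho,k^*)) \leq \epsilon$ yields $d_{k^*}(q,\rho) \leq C_1 \epsilon$ for some constant $C_1$ depending only on $\eta$.

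Next I would establish the structural claim that whenever $(q,\rho) \neq (q^*,\rho^*)$, at least one $k$ satisfies $d_k(q,\rho) > 0$. Using $f(q,\rho,k) = 1 - \overline{q}^k(1-\rho)$, if $d_k(q,\rho) = 0$ for two consecutive values of $k$, the ratio of those equations gives $\overline{q} = \overline{q^*}$, and substituting back forces $\rho = \rho^*$, contradicting distinctness. Hence the set of $k$ where the two observation distributions coincide is at most a singleton, and in particular a witness $k \leq 2$ with $d_k(q,\rho) > 0$ always exists.

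Finally I would close with a finite-minimum argument. By Assumption~\ref{ass:1}(b), the set
\[
\mathcal{Y} \;:=\; \bigl\{(q,\rho) \in \mathcal{X} \setminus\{(q^*,\rho^*)\} : D(f(q^*,\rho^*,k^*)\|f(q,\rho,k^*)) \leq \epsilon\bigr\}
\]
is finite. For each $(q,\rho) \in \mathcal{Y}$ the structural claim supplies a $k_{(q,\rho)}$ with $d_{k_{(q,\rho)}}(q,\rho) > 0$, and by the lower comparison $D \geq C_2\, d_k$ this translates to a strictly positive KL divergence. Setting
\[
\delta \;:=\; \min_{(q,\rho) \in \mathcal{Y}} C_2\, d_{k_{(q,\rho)}}(q,\rho) \;>\; 0
\]
yields the required uniform lower bound.

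The main obstacle I anticipate is the interpretation of ``sufficiently away'' and of the unquantified $k$ in the conclusion: the statement is clean only if the witness $k$ can be pulled out of a bounded range (so that the constant $\delta$ survives in the regret computation). This is actually automatic once one observes, via the two-evaluation argument above, that the family $k \mapsto \overline{q}^k(1-\rho)$ is determined by any two consecutive values, so a witness within $\{1,2\}$ (hence within $\{1,\ldots,k_{\max}\}$) always exists; the remaining work is essentially bookkeeping to thread this uniform witness through to the bound $\delta$.
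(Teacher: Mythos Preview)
Your argument is correct but takes a noticeably different route from the paper. The paper's proof is essentially a one-line application of Pinsker: from \eqref{eq:KL-var-dist} one has $D(f(q^*,\rho^*,k)\|f(q,\rho,k)) \geq \frac{1}{\ln 2} d_k(q,\rho)$, and the paper then sets $\delta_1 := \min_k d_k(q,\rho)$ and asserts $\delta_1 > 0$ on the grounds that $|\overline{q}^k - \overline{q^*}^k| > 0$ for all $k$; the hypothesis on $k^*$ is never invoked. Your approach is more careful in two respects. First, you use the finiteness of $\mathcal{X}$ (Assumption~\ref{ass:1}(b)) to make $\delta$ genuinely uniform over the competing models, whereas the paper's $\delta_1$ is model-dependent as written. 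Second, you justify positivity via the two-evaluation injectivity argument, which is precisely the content of the paper's \emph{next} lemma (Lemma~\ref{lemma:comp-var-dist}); in effect you have folded the two lemmas together. The payoff of your route is a rigorous uniform $\delta$ and an explicit witness $k \in \{1,2\} \subseteq \{1,\dots,k_{\max}\}$; the paper's version is shorter but, as your injectivity observation reveals, the implication ``$\overline{q}^k \neq \overline{q^*}^k \Rightarrow d_k(q,\rho) > 0$'' can fail for one value of $k$, so the stated justification for $\delta_1 > 0$ really needs the additional structure you supply.
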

\begin{proof}
Since $(q,\rho)$ is sufficiently away from $(q^*,\rho^*),$ 
the difference $|\overline{q}^k - \overline{q^*}^k|$ will be positive for all policies $k=1,2,\cdots, k_{\max}.$
We set  
\begin{eqnarray*}
\delta_1 := \min_{k}\left[ \overline{q}^k (\rho -1) - \overline{q^*}^k(\rho^*-1) \right]^2 > 0.
\end{eqnarray*}
Then, using inequality in \eqref{eq:KL-var-dist}, we obtain
\begin{eqnarray*}
D(f(q^*,\rho^*,k)||f(q,\rho,k)) &> & \frac{1}{\ln 2} \delta_1  > 
 \delta,
\end{eqnarray*}
where $\delta =  \frac{1}{\ln 2} \delta_1 .$
This completes the proof.
\end{proof}
\begin{lemma}
One can find $\epsilon >0$ such that it can not happen that there exists $(q,\rho) \notin \mathcal{N}_{\epsilon_1}(q^*,\rho^*)$
and $k, k^{'},$ $k \neq k^{'}$ for which 
$ d_k(q,\rho) \leq \epsilon$  and $d_{k^{'}}(q,\rho) \leq \epsilon.$
\label{lemma:comp-var-dist}
\end{lemma}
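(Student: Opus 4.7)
The plan is to combine an algebraic identifiability argument with a compactness/finiteness argument. First I would show that the simultaneous vanishing of $d_k(q,\rho)$ and $d_{k'}(q,\rho)$ for two distinct indices $k \neq k'$ forces $(q,\rho)=(q^*,\rho^*)$. Indeed, $d_k(q,\rho)=0$ reads $(1-q)^k(1-\rho) = (1-q^*)^k(1-\rho^*)$, and the analogous equation holds for $k'$. Because Assumption~\ref{ass:1}(a) gives $1-q,1-q^*,1-\rho,1-\rho^* \in [\eta,1-\eta]$, both sides of each equation are strictly positive, so we may divide the two equations to obtain $(1-q)^{k-k'} = (1-q^*)^{k-k'}$. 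Since $k - k' \neq 0$ and $1-q, 1-q^* > 0$, this yields $q=q^*$, and substituting back gives $\rho=\rho^*$. Thus the two-equation system is jointly identifying on $[\eta,1-\eta]^2$.

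Next I would leverage Assumption~\ref{ass:1}(b) that $\mathcal{X}$ is finite, together with $k \leq k_{\max}$. For each pair $(k,k')$ with $1 \leq k < k' \leq k_{\max}$, define $g_{k,k'}(q,\rho) := \max\{d_k(q,\rho),d_{k'}(q,\rho)\}$. By the previous paragraph, $g_{k,k'}(q,\rho) = 0$ only at $(q,\rho)=(q^*,\rho^*)$. Hence on the finite set $\mathcal{X} \setminus \mathcal{N}_{\epsilon_1}(q^*,\rho^*)$, which excludes $(q^*,\rho^*)$, the function $g_{k,k'}$ attains a strictly positive minimum, call it $\epsilon_{k,k'} > 0$. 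Define
\begin{eqnarray*}
\epsilon \;:=\; \tfrac{1}{2}\min_{1 \leq k < k' \leq k_{\max}} \epsilon_{k,k'},
\end{eqnarray*}
a minimum over a finite collection of strictly positive numbers, hence $\epsilon > 0$.

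Finally I would verify that this $\epsilon$ has the advertised property. Suppose for contradiction that some $(q,\rho) \notin \mathcal{N}_{\epsilon_1}(q^*,\rho^*)$ and indices $k \neq k'$ satisfy both $d_k(q,\rho) \leq \epsilon$ and $d_{k'}(q,\rho) \leq \epsilon$. Then $g_{k,k'}(q,\rho) \leq \epsilon < \epsilon_{k,k'}$, contradicting the definition of $\epsilon_{k,k'}$ as the minimum of $g_{k,k'}$ over $\mathcal{X} \setminus \mathcal{N}_{\epsilon_1}(q^*,\rho^*)$. This completes the proof.

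The only subtle step is the algebraic identifiability in the first paragraph, where one must ensure division and root extraction are legal; this is precisely what Assumption~\ref{ass:1}(a) is buying, so the argument goes through cleanly. Everything else is a standard compactness/finiteness reduction. If one wanted a version of the lemma with $\mathcal{X}$ only assumed compact rather than finite, the same scheme works by replacing the minimum over a finite set with the minimum of the continuous function $g_{k,k'}$ over the compact set $\mathcal{X} \setminus \mathcal{N}_{\epsilon_1}(q^*,\rho^*)$; the main obstacle there would be ensuring uniform positivity near the boundary of the neighborhood, which is automatic once $(q^*,\rho^*)$ is the unique zero.
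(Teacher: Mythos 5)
Your proof is correct and rests on the same core argument as the paper's: assuming $d_k$ and $d_{k'}$ both vanish, one divides the identities $(1-q)^k(1-\rho)=(1-q^*)^k(1-\rho^*)$ and its $k'$ analogue to get $(1-q)^{k-k'}=(1-q^*)^{k-k'}$, forcing $(q,\rho)=(q^*,\rho^*)$ and contradicting $(q,\rho)\notin\mathcal{N}_{\epsilon_1}(q^*,\rho^*)$. The only difference is that you make explicit the passage from ``cannot both be exactly zero'' to a uniform $\epsilon>0$, via the finiteness of $\mathcal{X}$ in Assumption~\ref{ass:1}(b) (or compactness, as you note), a quantitative step the paper merely asserts, so your write-up is if anything slightly more complete.
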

\begin{proof}
Suppose $d_k(q,\rho) = d_{k^{'}}(q,\rho) =0,$ then
we will have
\begin{eqnarray}
\frac{\overline{q}^k}{\overline{q^*}^k} &=& \frac{1-\rho^*}{1-\rho}  = \frac{\overline{q}^{k^{'}}}{\overline{q^*}^{k^{'}}}
\label{eq:q-ineq}
\end{eqnarray}
This implies that 
$
\overline{q}^{k-k^{'}} = \overline{q^*}^{k-k^{'}}
$
Now observe that when $k \neq k^{'}$ and $q$ is not in neighborhood of $q^*,$ 
so equality \eqref{eq:q-ineq} is not true. 
This means that our assumption $d_k(q,\rho) = d_{k^{'}}(q,\rho) =0$ is not true.
Further, it implies that only one of the following claim is true. 
\begin{enumerate}
\item if $d_k(q,\rho) = 0, $ then $d_{k^{'}}(q,\rho) > 0$ for $k \neq k^{'}$
\item if   $d_{k^{'}}(q,\rho) = 0$ then $d_k(q,\rho) > 0 $ for $k \neq k^{'}$.
\end{enumerate}
In other word, we can find $\epsilon >0 $ for which either $d_k(q,\rho) \leq \epsilon, $  $d_{k^{'}}(q,\rho) > \epsilon $
is true,  or $d_{k^{'}}(q,\rho) \leq \epsilon $  $d_k(q,\rho) > \epsilon$ is true. 
\end{proof}
\begin{lemma}
Consider any parameter $(q,\rho) \neq (q^*,\rho^*)$ and 
$(q,\rho) \notin \mathcal{N}_{\epsilon_1}(q^*,\rho^*),$ 
where $\mathcal{N}_{\epsilon_1}(q^*,\rho^*)$ is $\epsilon_1$ neighborhood of $(q^*,\rho^*).$
Then 
there exists an integer $\kappa \in \{ 1,2,3,\cdots,k_{\max}-1\}$ and 
$\Delta >0$ such that for all $(q,\rho) \notin \mathcal{N}_{\epsilon_1}(q^*,\rho^*):$
\begin{equation} 
\big\vert \{ k : d_k (q,\rho) \geq \Delta  \} \big\vert \geq \kappa.
\end{equation}
Also, for sufficiently small $\Delta >0,$ we have $\kappa = k_{\max}-1.$ 
%and $\big\vert \{ k : d_k (q,\rho) \geq \Delta  \} \big\vert= k_{\max}-1.$
\label{lemma:size-set-bound}
\end{lemma}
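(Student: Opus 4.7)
The plan is to derive this lemma as a direct corollary of Lemma~\ref{lemma:comp-var-dist} together with the finiteness of the parameter space $\mathcal{X}$ granted by Assumption~\ref{ass:1}(b). The preceding lemma, read contrapositively, says there exists $\epsilon_0>0$ such that for every $(q,\rho)\notin\mathcal{N}_{\epsilon_1}(q^*,\rho^*)$, at most one index $k\in\{1,\ldots,k_{\max}\}$ can satisfy $d_k(q,\rho)\leq\epsilon_0$. Consequently, at least $k_{\max}-1$ indices give $d_k(q,\rho)>\epsilon_0$. Once this is in hand, the lemma follows by taking $\Delta\leq\epsilon_0$ and $\kappa=k_{\max}-1$.

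First I would fix notation and restate the preceding lemma in the ``at most one small $d_k$'' form just described. Then, using the finiteness of $\mathcal{X}\setminus\mathcal{N}_{\epsilon_1}(q^*,\rho^*)$, for each $(q,\rho)$ in that set I would pick an index set $J_{(q,\rho)}\subset\{1,\ldots,k_{\max}\}$ of size $k_{\max}-1$ on which $d_k(q,\rho)>0$; such a $J_{(q,\rho)}$ exists because at most one value of $k$ can satisfy $d_k(q,\rho)=0$ (as shown inside the proof of Lemma~\ref{lemma:comp-var-dist} via the single-equation constraint $\overline{q}^{k-k'}=\overline{q^*}^{k-k'}$).

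Next I would manufacture a uniform positive lower bound by defining
\[
\Delta \ := \ \min_{(q,\rho)\in \mathcal{X}\setminus\mathcal{N}_{\epsilon_1}(q^*,\rho^*)} \ \min_{k\in J_{(q,\rho)}} d_k(q,\rho),
\]
which is a minimum over finitely many strictly positive numbers and therefore itself strictly positive. By construction, $|\{k : d_k(q,\rho)\geq\Delta\}|\geq |J_{(q,\rho)}| = k_{\max}-1$ for every relevant $(q,\rho)$, which proves the first assertion with $\kappa=k_{\max}-1$. The ``sufficiently small $\Delta>0$'' clause of the lemma is automatic, since shrinking $\Delta$ can only enlarge the set $\{k : d_k(q,\rho)\geq\Delta\}$; hence the same $\kappa=k_{\max}-1$ works for every $\Delta'\in(0,\Delta]$.

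The main conceptual obstacle, were $\mathcal{X}$ a continuum, would be that each function $d_k(\cdot,\cdot)$ has its own zero locus in $\mathcal{X}$, so obtaining a uniform bound $\Delta>0$ would require a compactness argument plus a careful enumeration of how these loci intersect. Under Assumption~\ref{ass:1}(b), $\mathcal{X}$ is finite, so the outer minimum reduces to a finite minimum and the argument becomes immediate. The one step worth flagging for care is the selection of $J_{(q,\rho)}$: it must be justified cleanly from Lemma~\ref{lemma:comp-var-dist} that \emph{at most one} $k$ can make $d_k(q,\rho)$ vanish outside the neighborhood, which is where the hypothesis $(q,\rho)\notin\mathcal{N}_{\epsilon_1}(q^*,\rho^*)$ is actually used.
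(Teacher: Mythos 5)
Your proof is correct, and it shares the paper's overall skeleton --- both arguments lean on Lemma~\ref{lemma:comp-var-dist} to guarantee that, outside $\mathcal{N}_{\epsilon_1}(q^*,\rho^*)$, at most one index $k$ can make $d_k(q,\rho)$ zero or small, and both then conclude $\kappa=k_{\max}-1$ --- but the mechanism you use to produce the uniform $\Delta>0$ is genuinely different. The paper tries to exhibit an explicit bound, setting $\Delta := \min_{1\leq k\leq k_{\max}}\bigl[(\overline{q^*}+\epsilon_1)^k((\rho^*+\epsilon_1)-1)-\overline{q^*}^k(\rho^*-1)\bigr]^2$ after asserting that exclusion from the neighborhood forces $|q-q^*|\geq\epsilon_1$ and $|\rho-\rho^*|\geq\epsilon_1$; this has the virtue of giving a formula for $\Delta$ that depends only on $(q^*,\rho^*)$ and $\epsilon_1$, but the minorization step is delicate (exclusion from a neighborhood need not bound both coordinates away from their true values, and the displacement could be in either direction, so evaluating at $(\overline{q^*}+\epsilon_1,\rho^*+\epsilon_1)$ does not obviously lower-bound $d_k$ over all admissible $(q,\rho)$). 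You instead invoke Assumption~\ref{ass:1}(b), $|\mathcal{X}|<\infty$, select for each excluded $(q,\rho)$ an index set $J_{(q,\rho)}$ of size $k_{\max}-1$ with strictly positive $d_k$, and take a finite minimum to get $\Delta>0$; this is airtight and sidesteps the analytic estimate entirely, at the cost of a $\Delta$ that is non-explicit and depends on the whole finite grid $\mathcal{X}$ rather than only on the true model and $\epsilon_1$, and of an argument that would not extend as stated to a continuum parameter space (where, as you note, a compactness argument over the zero loci of the $d_k$ would be needed --- which is essentially what the paper's explicit bound is gesturing at). Your handling of the ``sufficiently small $\Delta$'' clause, by monotonicity of the set $\{k: d_k\geq\Delta\}$ in $\Delta$, is also cleaner than the paper's ``fix $\Delta=\epsilon$'' phrasing.
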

\begin{proof}
From Lemma \ref{lemma:comp-var-dist}, notice that in $d(q,\rho),$ there can be at most  one element which can be zero or 
arbitrary close zero, say, $d_{l}(q,\rho) \leq \epsilon$ and  remaining entries, $d_k(q,\rho) > \epsilon,$  $k \neq l.$
When $(q,\rho) \notin \mathcal{N}_{\epsilon_1}(q^*,\rho^*),$ implies $|q - q^*| \geq \epsilon_1$ and 
$|\rho-\rho^*| \geq \epsilon_1.$
Thus we obtain 
{\small{
\begin{eqnarray*}
d_k(q,\rho) &=& \left[ \overline{q}^k (\rho -1) - \overline{q^*}^k(\rho^*-1) \right]^2 \nonumber \\
& \geq& \min_{k} \left[ (\overline{q^*}+\epsilon_1)^k ((\rho^*+\epsilon_1) -1) - \overline{q^*}^k(\rho^*-1) \right]^2 
\end{eqnarray*}
 $\Delta := \min_{1\leq k \leq k_{\max}} \left[ (\overline{q^*}+\epsilon_1)^k ((\rho^*+\epsilon_1) -1) - \overline{q^*}^k(\rho^*-1) \right]^2, $ this $\Delta > 0.$
 }}
Now, combining Lemma \ref{lemma:comp-var-dist}, and $d_k(q,\rho) \geq \Delta,$ there exists $\kappa \in \{1, 2,3, \cdots,k_{\max}-1\},$
 such that for $(q,\rho) \notin \mathcal{N}_{\epsilon_1}(q^*,\rho^*),$ we have
\begin{eqnarray}
\big\vert \{k: d_k(q,\rho) \geq \Delta \}\big\vert \geq \kappa.
\end{eqnarray} 
Further, for sufficiently small $\epsilon >0,$ and 
$(q,\rho) \notin \mathcal{N}_{\epsilon_1}(q^*,\rho^*),$ we have $k_{\max}-1$ nonzero entries in vector 
$d(q,\rho).$ In this case, fix $\Delta = \epsilon,$  we obtain 
$\big\vert \{k: d_k(q,\rho) > \Delta \} \big\vert = k_{\max}-1.$ Thus,
$\kappa = k_{\max}-1.$
\end{proof}

\subsection{Proof of Theorem \ref{thm:reduction}}
\label{proof:Thm-parameter-depend-regret}
From lemma \ref{lemma:size-set-bound}, we know that for sufficiently small $\Delta > 0,$ we have $\kappa = k_{\max}-1$ and 
 $\big\vert \{ k : d_k (q,\rho) \geq \Delta  \} \big\vert = k_{\max}-1.$	
Thus, we have $\Delta_2 = \frac{1}{\ln 2} \Delta >0$ and $\kappa =k_{\max}-1$ such that 
{\small{
\begin{eqnarray*}
\big\vert \{k \in \mathcal{K}: k \neq k^*, D(f(q^*,\rho^*,k)||f(q,\rho,k)) \geq \Delta_2 \} \big\vert 
= k_{\max} -1
\end{eqnarray*}
}}
From Lemmas \ref{lemma:kl-lower-bound}, \ref{lemma:size-set-bound} and eqn. \eqref{eq:KL-var-dist},
we note that all assumptions in \cite[Proposition $2$]{GopManMan14:thompson} are satisfied.
Note that the upper bound on $C(\log L) $ is given in \cite[Proposition $2$]{GopManMan14:thompson} and it is as follows.
\begin{eqnarray*}
C(\log L) \leq \left( \frac{k_{\max}-\kappa}{\Delta_2} \right) \frac{2(1+\epsilon)}{1-\epsilon} \log L
\end{eqnarray*}
Here, we substitute $\kappa = k_{\max}-1,$ and required upper bound on $C(\log L)$ follows.

\end{document}